\newtheorem{theorem}{Theorem}[section]
\newtheorem{lemma}{Lemma}[section]
\newtheorem{proposition}{Proposition}[section]
\newtheorem{remark}{Remark}[section]
\newtheorem{definition}{Definition}[section]
\theoremstyle{definition}
\newcommand{\sx}{{\bf S_x}}
\newcommand{\sy}{{\bf S_y}}
\newcommand{\sxy}{{\bf S_{xy}}}
\newcommand{\syx}{{\bf S_{yx}}}
\newcommand{\px}{{\bf \Phi}}
\newcommand{\py}{{\bf \Psi}}
\newcommand{\U}{{\bf U}}
\newcommand{\D}{{\bf D}}
\newcommand{\V}{{\bf V}}
\newcommand{\F}{{\bf F}}
\newcommand{\R}{{\bf R}}
\newcommand{\K}{{\bf K}}
\newcommand{\Q}{{\bf Q}}
\newcommand{\X}{{\bf X}}
\newcommand{\Y}{{\bf Y}}
\newcommand{\W}{{\bf W}}
\newcommand{\sxx}{{\bf S_x^{\frac{1}{2}}}}
\newcommand{\syy}{{\bf S_y^{\frac{1}{2}}}}
\newcommand{\sxxx}{{\bf S_x^{-\frac{1}{2}}}}
\newcommand{\syyy}{{\bf S_y^{-\frac{1}{2}}}}
\newcommand{\lm}{{\bf \Lambda}}
\newcommand{\tx}{\Delta\phi}
\newcommand{\ty}{\Delta\psi}
\newcommand{\ttx}{\Delta\widetilde{\phi}}
\newcommand{\tty}{\Delta\widetilde{\psi}}
\newcommand{\tpx}{\widetilde{\phi}}
\newcommand{\tpy}{\widetilde{\psi}}
\newcommand{\xnorm}[1]{{\!\|}#1{\|_x\!}}
\newcommand{\ynorm}[1]{{\!\|}#1{\|_y\!}}
\icmltitlerunning{Scalable Canonical Correlation Analysis}
\begin{document} 

\twocolumn[
\icmltitle{Finding Linear Structure in Large Datasets with \\Scalable Canonical Correlation Analysis}

\icmlauthor{Zhuang Ma}{zhuangma@wharton.upenn.edu}
\icmlauthor{Yichao Lu}{yichaolu@wharton.upenn.edu}
\icmlauthor{Dean Foster}{dean@foster.net}
\icmladdress{Department of Statistics, The Wharton School, University of Pennsylvania, Philadelphia, PA 19104 U.S.A}
\icmlkeywords{Canonical Correlation Analysis, Stochastic Optimization, Large Scale Algorithm}

\vskip 0.3in
]

\begin{abstract} 
Canonical Correlation Analysis (CCA) is a widely used spectral technique for finding correlation structures in multi-view datasets. In this paper, we tackle the problem of large scale CCA, where classical algorithms, usually requiring computing the product of two huge matrices and huge matrix decomposition, are computationally and storage expensive. We recast CCA from a novel perspective and propose a scalable and memory efficient \textit{Augmented Approximate Gradient (AppGrad)} scheme for finding top $k$ dimensional canonical subspace which only involves  large matrix multiplying a thin matrix of width $k$ and small matrix decomposition of dimension $k\times k$. Further, \textit{AppGrad} achieves optimal storage complexity $O(k(p_1+p_2))$, compared with classical algorithms which usually require $O(p_1^2+p_2^2)$ space to store two dense whitening matrices.  The proposed scheme naturally generalizes to stochastic optimization regime, especially efficient for huge datasets where batch algorithms are prohibitive. The online property of stochastic \textit{AppGrad} is also well suited to   the streaming scenario, where data comes sequentially. To the best of our knowledge, it is the first stochastic algorithm for CCA. Experiments on four real data sets are provided to show the effectiveness of the proposed methods. 
\end{abstract} 

\section{Introduction}
\subsection{Background}
Canonical Correlation Analysis (CCA), first introduced in 1936 by \cite{hotelling36}, is a foundamental statistical tool to characterize the relationship between two multidimensional variables, which finds a wide range of applications. For example, CCA naturally fits into multi-view learning tasks and tailored to generate low dimensional feature representations using abandunt and inexpensive unlabeled datasets to supplement or refine the expensive labeled data in a semi-supervised fashion. Improved generalization accuracy has been witnessed or proved in areas such as regression \cite{kakade07}, clustering \cite{chaudhuri2009multi,blaschko2008correlational}, dimension reduction \cite{foster08, mcwilliams2013correlated}, word embeddings \cite{dhillon11,dhillon12}, etc. Besides, CCA has also been succesfully applied to genome-wide association study (GWAS) and has been shown powerful for understanding the relationship between genetic variations and phenotypes \cite{witten2009penalized, chen2012structured}.

There are various equivalent ways to define CCA and here we use the linear algebraic formulation of \cite{golub1995canonical}, which captures the very essense of the procedure, pursuing the directions of maximal correlations between two data matrices. 
\begin{definition}
For data matrices $\X\in\mathbb{R}^{n\times p_1}, \Y\in \mathbb{R}^{n\times p_2}$ Let $\sx=\X^\top\X/n, \, \sy=\Y^\top\Y/n,\, \sxy=\X^\top\Y/n$ and $p=\min\{p_1, p_2\}$. The canonical correlations $\lambda_1, \cdots, \lambda_p$ and corresponding pair of canonical vectors $\{(\phi_i, \psi_i)\}_{i=1}^p$ between $\X$ and $\Y$ are defined recursively by 
\begin{align*}
\begin{gathered}
(\phi_j, \psi_j)=\arg\max\limits_{\substack{\phi^\top\sx\phi=1, \,\psi^\top\sy\psi=1 \\ \phi^\top\sx\phi_i=0, \,\psi^\top\sy\psi_i=0, \,1\leq i\leq j-1 }} \phi^\top\sxy\psi \\
\lambda_j=\phi_j^\top\sxy\psi_j  \quad j=1, \cdots, p
\end{gathered}
\end{align*}
\end{definition}

\begin{lemma}
\label{def}
Let $\sxxx\sxy\syyy=\U\D\V^{\top}$ be the singular value decomposition. Then $\px={\bf S_x^{-\frac{1}{2}}}\U$, $\py={\bf S_y^{-\frac{1}{2}}} \V$, and $\lm=\D$ where $\px=(\phi_1, \cdots, \phi_p), \py=(\psi_1, \cdots, \psi_p)$ and $\lm=diag(\lambda_1, \cdots, \lambda_p)$. 
\end{lemma}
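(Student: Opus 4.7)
The plan is to reduce the constrained optimization defining CCA to the variational characterization of singular values of $M := \sxxx\sxy\syyy$ through a whitening change of variables. Throughout I assume $\sx$ and $\sy$ are positive definite so that $\sxx$ and $\sxxx$ are well defined and symmetric; then $\sxx\sxx=\sx$ by the defining property of the symmetric square root.

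First, I would substitute $u=\sxx\phi$ and $v=\syy\psi$, which are bijections on $\mathbb{R}^{p_1}$ and $\mathbb{R}^{p_2}$. Using symmetry of $\sxx$, the unit-norm constraint $\phi^\top\sx\phi=1$ becomes $\phi^\top\sxx\sxx\phi=u^\top u=1$, and analogously $v^\top v=1$. The same identity applied with $\phi_i,\psi_i$ converts the recursive orthogonality constraints $\phi^\top\sx\phi_i=0,\ \psi^\top\sy\psi_i=0$ into $u^\top u_i=0,\ v^\top v_i=0$. The objective transforms as
\begin{equation*}
\phi^\top\sxy\psi = u^\top \sxxx\sxy\syyy v = u^\top M v,
\end{equation*}
so the recursion becomes: maximize $u^\top M v$ over unit $u,v$ orthogonal to all previously chosen $u_i,v_i$.

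Second, I would invoke the variational characterization of the SVD. Writing $M=\U\D\V^\top$ with $\D=\mathrm{diag}(d_1\ge d_2\ge\cdots)$ and setting $a=\U^\top u,\ b=\V^\top v$, orthogonality of $\U,\V$ gives $\|a\|=\|b\|=1$ and $u^\top M v=a^\top\D b=\sum_k d_k a_k b_k$. The first-step maximum is $d_1$, attained at $a=b=e_1$, giving $u_1=\U_{\cdot 1},\ v_1=\V_{\cdot 1}$. Inductively, once $u_i=\U_{\cdot i}$ and $v_i=\V_{\cdot i}$ for $i<j$, the constraints $u\perp u_i,\ v\perp v_i$ translate to $a_i=b_i=0$, and then Cauchy--Schwarz yields $a^\top\D b\le d_j$ with equality at $a=b=e_j$. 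Undoing the substitution produces $\phi_j=\sxxx\U_{\cdot j}$ and $\psi_j=\syyy\V_{\cdot j}$ with $\lambda_j=d_j$, which is exactly $\px=\sxxx\U,\ \py=\syyy\V,\ \lm=\D$.

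The only genuinely substantive step is bookkeeping the recursive orthogonality constraints through the whitening change of variables; everything else reduces to the standard deflation derivation of the SVD. A minor subtlety is degeneracy: when $\D$ has repeated singular values the canonical vectors are not uniquely determined, but the lemma only asserts the existence of a valid choice, and any SVD of $M$ supplies one.
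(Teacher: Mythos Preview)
Your argument is correct and is exactly the standard derivation: the whitening substitution $u=\sxx\phi,\ v=\syy\psi$ turns the CCA recursion into the Courant--Fischer/variational characterization of the singular values of $M=\sxxx\sxy\syyy$, and deflation recovers the successive canonical pairs. The remark on non-uniqueness under repeated singular values is also appropriate.

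There is nothing to compare against, however, because the paper does not supply a proof of this lemma. It is stated as a classical fact (the formulation is attributed to Golub and Zha) and is then \emph{used} in the proof of Lemma~2.3 ($\sxy=\sx\px\lm\py^\top\sy$), but never itself proved. Your write-up would serve perfectly well as the missing justification.
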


The identifiability of canonical vectors $(\px, \py)$ is equivalent to the identifiability of the singular vectors $(\U, \V)$. Lemma~\ref{def} implies that the leading $k$ dimensional CCA subspace can be solved by first computing the whitening matrices $\sxxx, \syyy$ and then perform a $k$-truncated SVD on the whitened covariance matrix $\sxxx\sxy\syyy$.  This classical algorithm is feasible and accurate when the data matrices are small but it can be slow and numerically unstable for large scale datasets which are common in modern natural language processing (large corpora, \citet{dhillon11, dhillon12}) and multi-view learning (abandunt and inexpensive unlabeled data, \citet{hariharanlarge}) applications. 

Throughout the paper, we call the step of orthonormalizing the columns of $\X$ and $\Y$ \textbf{whitening step}. The computational complexity of the classical algorithm is dominated by the whitening step. There are two major bottlenecks,
\begin{itemize}
\item  Huge matrix multiplication $\X^\top\X, \Y^\top\Y$ to obtain $\sx, \sy$ with computational complexity $O(np_1^2+np_2^2)$ for general dense $\X$ and $\Y$. 
\item  Large matrix decomposition to compute $\sxxx$ and $\syyy$ with computational complexity $O(p_1^3+p_2^3)$ (Even when $\X$ and $\Y$ are sparse, $\sx, \sy$ are not necessarily sparse)
\end{itemize}
\begin{remark}
The whitening step dominates the $k$-truncated SVD step because the top $k$ dimensional singular vectors can be efficiently computed by randomized SVD algorithms (see \citet{tropp11} and many others). 
\end{remark}
\begin{remark}
Another classical algorithm (built-in function in Matlab) introduced in \cite{bjorck1973numerical} uses a different way of whitening. It first carrys out a QR decomposition, $\X=\Q_x\R_x$ and $\Y=\Q_y\R_y$ and then performs a SVD on $\Q_x^\top\Q_y$, which has the same computational complexity $O(np_1^2+np_2^2)$ as the algorithm indicated by Lemma~\ref{def}. However, it is difficult to exploit sparsity in QR factorization while $\X^\top\X, \Y^\top\Y$ can be efficiently computed when $\X$ and $\Y$ are sparse. 
\end{remark}

Besides computational issues, extra $O(p_1^2+p_2^2)$ space  is necessary to store two whitening matrices $\sxxx$ and $\syyy$ (typically dense). In high dimensional applications where the number of features is huge, this can be another bottleneck considering the capacity of RAM of personal desktops (10-20 GB). In large distributed storage systems, the extra required space might incur heavy communication cost. 

Therefore, it is natural to ask: is there a scalable algorithm that avoids huge matrix decomposition and huge matrix multiplication? Is it memory efficient? Or even more ambitiously, is there an online algorithm that generates decent approximation given a fixed computational power (e.g. CPU time, FLOP)? 

\subsection{Related Work}
Scalability begins to play an increasingly important role in modern machine learning applications and draws more and more attention. Recently lots of promising progress emerged in the literature concerning with randomized algorithms for large scale matrix approximations, SVD, and Principal Component Analysis \cite{sarlos2006improved, liberty2007randomized, woolfe2008fast, tropp11}. Unfortunately, these techniques does not directly solve CCA due to the whitening step. Several authors have tried to devise a scalable CCA algorithm. \citet{avron13} proposed an efficient approach for CCA between two tall and thin matrices ($p_1, p_2\ll n$) harnessing the recently developed tools, \textit{Subsampled Randomized Hadamard Transform}, which only subsampled a small proportion of the $n$ data points to approximate the matrix product. However, when the size of the features, $p_1$ and $p_2$, are large, the sampling scheme does not work. Later, \citet{lu2014large} consider sparse design matrices and formulate CCA as iterative least squares, where in each iteration a fast regression algorithm that exploits sparsity is applied. 

Another related line of research considers stochastic optimization algorithms for PCA \cite{arora2012stochastic, mitliagkas2013memory, balsubramani2013fast}, which date back to \citet{oja1985stochastic}. Compared with batch algorithms, the stochastic versions empirically converge much faster with similar accuracy. Further, these stochastic algorithms can be applied to streaming setting where data comes sequentially (one pass or several pass) without being stored. As mentioned in \cite{arora2012stochastic}, stochastic optimization algorithm for CCA is more challenging and remains an open problem because of the whitening step.

\subsection{Main Contribution}
The main contribution of this paper is to directly tackle CCA as a nonconvex optimization problem and propose a novel Augmented Approximate Gradient (\textit{AppGrad}) scheme and its stochastic variant for finding the top $k$ dimensional canonical subspace. Its advantages over state-of-art CCA algorithms are three folds. \textit{Firstly}, \textit{AppGrad} scheme only involves large matrix multiplying a thin matrix of width $k$ and small matrix decomposition of dimension $k\times k$, and therefore to some extent is free from the two bottlenecks. It also benefits if $\X$ and $\Y$ are sparse while classical algorithm still needs to invert the dense matrices $\X^\top\X$ and $\Y^\top\Y$.  \textit{ Secondly}, \textit{AppGrad} achieves optimal storage complexity $O(k(p_1+p_2))$, the space necessary to store the output, compared with classical algorithms which usually require $O(p_1^2+p_2^2)$ for storing the whitening matrices. \textit{Thirdly}, the stochastic (online) variant of \textit{AppGrad} is especially efficient for large scale datasets if moderate accuracy is desired. It is well-suited to the case when computational resources are limited or data comes as a stream. To the best of our knowledge, it is the first stochastic algorithm for CCA, which partly gives an affirmative answer to a question left open in \cite{arora2012stochastic}.  

The rest of the paper is organized as follows. We introduce \textit{AppGrad} scheme and establish its convergence properties in section 2. We extend the algorithm to stochastic settings in section 3. Extensive real data experiments are presented in section 4. Concluding remarks and future work are summarized in section 5. Proof of Theorem~\ref{thm} and Proposition~\ref{fixpoint-k} are relegated to the supplementary material. 

\section{Algorithm}
For simplicity, we first focus on the leading canonical pair $(\phi_1, \psi_1)$ to motivate the proposed algorithms. Results for general scenario can be obtained in the same manner and will be briefly discussed in the later part of this section. 
\subsection{An Optimization Perspective}
Throughout the paper, we assume $\X$ and $\Y$ are of full rank. We use $\|\cdot\|$ for $L_2$ norm.  $\forall \,u\in \mathbb{R}^{p_1}, \, v\in\mathbb{R}^{p_2}$, we define $\| u\| _x=(u^\top\sx u)^{\frac{1}{2}}$ and $\| v\| _y=(v^\top\sy v)^{\frac{1}{2}}$, which are norms induced by $\X$ and $\Y$. 

To begin with, we recast CCA as an nonconvex optimization problem \cite{golub1995canonical}. 
\begin{lemma}
$(\phi_1, \psi_1)$ is the solution of 
\begin{equation}
\begin{gathered}
\min \frac{1}{2n}\|\X\phi-\Y\psi\|^2\\
\mbox{subject\, to}\quad  \phi^\top\sx\phi=1,\,\psi^\top\sy\psi=1
\end{gathered}
\label{optimization}
\end{equation}
\label{optformu}
\end{lemma}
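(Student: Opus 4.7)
The plan is to reduce the constrained minimization in \eqref{optimization} to the maximization in the defining recursion for $j=1$ by direct expansion of the objective.

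First I would expand the squared norm:
\begin{equation*}
\frac{1}{2n}\|\X\phi-\Y\psi\|^2 = \frac{1}{2}\bigl(\phi^\top\sx\phi - 2\phi^\top\sxy\psi + \psi^\top\sy\psi\bigr),
\end{equation*}
using the definitions $\sx=\X^\top\X/n$, $\sy=\Y^\top\Y/n$, $\sxy=\X^\top\Y/n$. Next, I would substitute the two equality constraints $\phi^\top\sx\phi=1$ and $\psi^\top\sy\psi=1$ directly into this expression, so that on the feasible set the objective collapses to
\begin{equation*}
1 - \phi^\top\sxy\psi.
\end{equation*}

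Therefore minimizing the objective in \eqref{optimization} over the feasible set is equivalent to maximizing $\phi^\top\sxy\psi$ under the same constraints, and the $j=1$ case of the CCA definition gives $(\phi_1,\psi_1)$ as the unique maximizer (up to sign). This identifies $(\phi_1,\psi_1)$ as the solution of \eqref{optimization}.

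The only subtlety worth spelling out is that the equality constraints, rather than inequality relaxations $\phi^\top\sx\phi\le 1$, $\psi^\top\sy\psi\le 1$, are essential: without them, $\phi=0,\psi=0$ would trivially drive the objective to zero. I would briefly note that the assumption that $\X$ and $\Y$ are of full rank (stated at the start of Section~2) guarantees $\sx,\sy\succ 0$, so the norms $\|\cdot\|_x,\|\cdot\|_y$ are well-defined and the constraint set is a nonempty compact manifold, ensuring existence of a minimizer; identifiability then follows from Lemma~\ref{def}. No further calculation is required, and I do not anticipate any real obstacle in this proof.
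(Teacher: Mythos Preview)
Your proposal is correct and is exactly the standard reduction. The paper itself does not supply a proof of this lemma; it simply attributes the optimization formulation to \cite{golub1995canonical} and states the result, so there is nothing further to compare against.
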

Although \eqref{optimization} is a nonconvex (due to the nonconvex constraint), \cite{golub1995canonical} showed that an alternating minimization strategy (Algorithm~\ref{alg:ALS}), or rather iterative least squares, actually converges to the leading canonical pair. However, each update $\phi^{t+1}={\bf S_x^{-1}}\sxy\psi^t$ is computationally intensive. Essentially, the alternating least squares acts like a second order method, which is usually recognized to be inefficient for large-scale datasets, especially when current estimate is not close enough to the optimum. Therefore, it is natural to ask: is there a valid first order method that solves \eqref{optimization}?
\begin{algorithm}[tb]
   \caption{CCA via Alternating Least Squares}
   \label{alg:ALS}
\begin{algorithmic}
   \STATE {\bfseries Input:} Data matrix  $\X\in\mathbb{R}^{n\times p_1}, \Y\in \mathbb{R}^{n\times p_2}$ and initialization $(\phi^0,\psi^0)$
   \STATE {\bf Output :}$(\phi_{\textsc{Als} }, \psi_{\textsc{Als}})$
   \REPEAT
    \STATE  $\phi^{t+1}=\arg\min\limits_{\phi}\frac{1}{2n}\|\X\phi-\Y\psi^t\|^2={\bf S_x^{-1}}\sxy\psi^t$
    \STATE  $\phi^{t+1}=\phi^{t+1}/\|\phi^{t+1}\|_x$  
    \STATE  $\psi^{t+1}=\arg\min\limits_{\psi}\frac{1}{2n}\|\Y\psi-\X\phi^t\|^2={\bf S_y^{-1}}\syx\phi^t$ 
    \STATE $\psi^{t+1}=\psi^{t+1}/\|\psi^{t+1}\|_y$
   \UNTIL{convergence}
\end{algorithmic}
\end{algorithm}
Heuristics borrowed from convex optimization literature give rise to a projected gradient scheme summarized in Algorithm~\ref{alg:GD}. Instead of completely solving a least squares in each iterate, a single gradient step of \eqref{optimization} is performed and then project back to the constrained domain, which avoids inverting a huge matrix. Unfortunately, the following proposition demonstrates that Algorithm~\ref{alg:GD} fails to converge to the leading canonical pair. 
\begin{algorithm}[tb]
   \caption{CCA via Naive Gradient Descent}
   \label{alg:GD}
\begin{algorithmic}
   \STATE {\bfseries Input:} Data matrix  $\X\in\mathbb{R}^{n\times p_1}, \Y\in \mathbb{R}^{n\times p_2}$, initialization $(\phi^0,\psi^0)$, step size $\eta_1,\eta_2$
   \STATE {\bf Output :} NAN (incorrect algorithm) 
   \REPEAT
    \STATE  $\phi^{t+1}=\phi^t-\eta_1\X^\top(\X\phi^t-\Y\psi^t)/n$ 
    \STATE  $\phi^{t+1}=\phi^{t+1}/\|\phi^{t+1}\|_x$
    \STATE  $\psi^{t+1}=\psi^t-\eta_2\Y^\top(\Y\psi^t-\X\phi^t)/n$
     \STATE $\psi^{t+1}=\psi^{t+1}/\|\psi^{t+1}\|_y$
   \UNTIL{convergence}
\end{algorithmic}
\end{algorithm}
\begin{proposition}
If leading canonical correlation $\lambda_1\neq 1$ and either $\phi_1$ is not an eigenvector of $\sx$ or $\psi_1$ is not an eigenvector of $\sy$,  then $\forall \eta_1,\eta_2> 0$, the leading canonical pair $(\phi_1, \psi_1)$ is not a fixed point of the naive gradient scheme in Algorithm~\ref{alg:GD}. Therefore, the algorithm does not converge to $(\phi_1, \psi_1)$. 
\label{fixgd}
\end{proposition}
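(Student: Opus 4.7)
The plan is to substitute the candidate fixed point $(\phi^t,\psi^t)=(\phi_1,\psi_1)$ into the two updates of Algorithm~\ref{alg:GD} and show, using the CCA identity $\sxy\psi_1=\lambda_1\sx\phi_1$, that the pre-normalization iterate cannot be parallel to $\phi_1$ unless $\phi_1$ is already an eigenvector of $\sx$. A symmetric computation handles $\psi_1$, and since the joint pair is a fixed point only when both component updates individually return their inputs, violating either eigenvector condition in the hypothesis will be enough.

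First I would write out the pre-normalization iterate
\[
\widetilde{\phi}^{t+1} \;=\; \phi_1 - \eta_1\bigl(\sx\phi_1 - \sxy\psi_1\bigr).
\]
Because $\|\phi_1\|_x = 1$, the $\|\cdot\|_x$-normalization step returns $\pm\phi_1$ if and only if $\widetilde{\phi}^{t+1}$ is a nonzero scalar multiple of $\phi_1$, equivalently $\sx\phi_1-\sxy\psi_1$ is parallel to $\phi_1$ (allowing the zero-vector case). So the whole question reduces to whether this residual is parallel to $\phi_1$.

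Next I would extract the identity $\sxy\psi_1=\lambda_1\sx\phi_1$ from Lemma~\ref{def}. Writing $\sxy = \sxx(\sxxx\sxy\syyy)\syy = \sxx\U\D\V^\top\syy$ and applying this to $\psi_1=\syyy v_1$ yields $\sxy\psi_1 = \lambda_1\sxx u_1 = \lambda_1\sx\phi_1$. Substituting into the parallelism condition collapses the residual to $\eta_1(1-\lambda_1)\sx\phi_1$, so the fixed-point requirement becomes $(1-\lambda_1)\sx\phi_1 \parallel \phi_1$. Since $\eta_1>0$ and $\lambda_1\neq 1$, this forces $\sx\phi_1=\mu\phi_1$ for some scalar $\mu$, contradicting the hypothesis that $\phi_1$ is not an eigenvector of $\sx$; the analogous computation with $\syx\phi_1=\lambda_1\sy\psi_1$ rules out the other half of the disjunction. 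There is no real obstacle, only bookkeeping: one must invoke $\lambda_1\neq 1$ explicitly at the parallelism step and use $\eta_1,\eta_2>0$ to prevent the updates from degenerating, since these are precisely the boundary cases in which $(\phi_1,\psi_1)$ would trivially remain a fixed point.
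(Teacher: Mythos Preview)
Your proposal is correct and follows essentially the same approach that the paper intends: the paper omits the details and points to the proof of Proposition~\ref{fixpoint}, which proceeds by substituting the candidate point into the update and invoking the identity $\sxy\psi_1=\lambda_1\sx\phi_1$ (a special case of Lemma~\ref{decom}). Your derivation carries out exactly this computation for the naive gradient scheme and correctly isolates where the hypotheses $\lambda_1\neq 1$, $\eta_1,\eta_2>0$, and the non-eigenvector condition enter.
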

\begin{proof}[Proof of Proposition~\ref{fixgd}]
The proof is similar to the proof of Proposition~\ref{fixpoint} and we leave out the details here.
\end{proof}
The failure of Algorithm~\ref{alg:GD} is due to the nonconvex nature of \eqref{optimization}. Although every gradient step might decrease the objective function, this property no longer persists after projecting to its nonconvex domain  $\big\{(\phi, \psi)\,|\, \phi^\top\sx\phi=1,\,\psi^\top\sy\psi=1\big\}$ (the normalization step). On the contrary, decreases triggered by gradient descent is always maintained if projecting to a convex region. 

\subsection{\textit{AppGrad} Scheme}
As a remedy, we propose a novel Augmented Approximate Gradient (\textit{AppGrad}) scheme summarized in Algorithm~\ref{alg:RG}. It inherits the convergence guarantee of alternating least squares as well as the scalability and memory efficiency of first order methods, which only involves matrix-vector multiplication and only requires $O(p_1+p_2)$ extra space. 

\begin{algorithm}[tb]
   \caption{CCA via AppGrad}
   \label{alg:RG}
\begin{algorithmic}
   \STATE {\bfseries Input:} Data matrix  $\X\in\mathbb{R}^{n\times p_1}, \Y\in \mathbb{R}^{n\times p_2}$, initialization $(\phi^0,\psi^0, \widetilde{\phi}^0, \widetilde{\psi}^0)$, step size $\eta_1,\eta_2$
   \STATE {\bf Output:} $(\phi_{\textsc{Ag} }, \psi_{\textsc{Ag}}, \widetilde{\phi}_{\textsc{Ag}}, \widetilde{\psi}_{\textsc{Ag}})$ 
   \REPEAT
    \STATE  $\tpx^{t+1}=\tpx^t-\eta_1\X^\top(\X\tpx^t-\Y\psi^t)/n$ 
   \STATE   $\phi^{t+1}=\tpx^{t+1}/\;\xnorm{\tpx^{t+1}}$ 
    \STATE  $\tpy^{t+1}=\tpy^t-\eta_2\Y^\top(\Y\tpy^t-\X\phi^t)/n$ 
          \STATE   $\phi^{t+1}=\tpy^{t+1}/\;\ynorm{\tpy^{t+1}}$ 
   \UNTIL{convergence}
\end{algorithmic}
\end{algorithm}

\textit{AppGrad} seems unnatural at first sight but has some nice intuitions behind as we will discuss later. The differences and similarities between these algorithms are subtle but crucial. Compared with the naive gradient descent, we introduce two auxiliary variables $(\tpx^t,\tpy^t)$, an unnormalized version of $(\phi^t, \psi^t)$. During each iterate, we keep updating $\tpx^t$ and $\tpy^t$ without scaling them to have unit norm, which in turn produces the `correct' normalized counterpart, $(\phi^t, \psi^t)$. It turns out that  $(\phi_1, \psi_1, \lambda_1\phi_1, \lambda_1\psi_1)$ is a fixed point of the dynamic system $\{(\phi^t, \psi^t, \tpx^t,\tpy^t)\}_{t=0}^{\infty}$.  
\begin{proposition}
\label{fixpoint}
$\forall \,  i\leq p$, let $\widetilde{\phi}_i=\lambda_i\phi_i, \widetilde{\psi}_i=\lambda_i\psi_i$, then $(\phi_i, \psi_i, \widetilde{\phi}_i, \widetilde{\psi}_i)$ are the fixed points of \textit{AppGrad} scheme. 
\end{proposition}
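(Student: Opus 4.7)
The plan is a direct verification: plug the candidate point $(\phi_i,\psi_i,\lambda_i\phi_i,\lambda_i\psi_i)$ into the four \textit{AppGrad} updates and check that each line returns the same value. The only non-trivial input we need is the pair of \emph{generalized eigenvalue equations}
\[
\sxy\,\psi_i=\lambda_i\,\sx\,\phi_i,\qquad \syx\,\phi_i=\lambda_i\,\sy\,\psi_i,
\]
together with the normalization $\|\phi_i\|_x=\|\psi_i\|_y=1$ coming from the CCA constraints.

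First I would derive these two identities from Lemma~\ref{def}. Writing $\px=\sxxx\U$ and $\py=\syyy\V$ with $\sxxx\sxy\syyy=\U\D\V^\top$, a one-line manipulation gives $\sxy\py=\sxx\U\D=\sx\px\D$, i.e.\ column-wise $\sxy\psi_i=\lambda_i\sx\phi_i$; the second identity follows by symmetry. The normalization $\|\phi_i\|_x=1$ is immediate because $\px^\top\sx\px=\U^\top\sxxx\sx\sxxx\U=\U^\top\U=I$.

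Next, set $\tpx^t=\lambda_i\phi_i$ and $\psi^t=\psi_i$ and compute
\[
\tpx^{t+1}=\lambda_i\phi_i-\eta_1\bigl(\lambda_i\sx\phi_i-\sxy\psi_i\bigr)=\lambda_i\phi_i,
\]
where the bracket vanishes by the first eigenvalue equation. The normalization step then yields
\[
\phi^{t+1}=\frac{\lambda_i\phi_i}{\xnorm{\lambda_i\phi_i}}=\frac{\lambda_i\phi_i}{\lambda_i\,\xnorm{\phi_i}}=\phi_i,
\]
using $\xnorm{\phi_i}=1$ and $\lambda_i>0$. An identical argument, substituting $\tpy^t=\lambda_i\psi_i$ and $\phi^t=\phi_i$ and invoking $\syx\phi_i=\lambda_i\sy\psi_i$, gives $\tpy^{t+1}=\lambda_i\psi_i$ and $\psi^{t+1}=\psi_i$, so the tuple is indeed fixed.

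The main (mild) obstacle is the case $\lambda_i=0$, where dividing by $\lambda_i$ in the normalization step is not allowed and $\widetilde{\phi}_i=\widetilde{\psi}_i=0$ makes the scaled iterate ill-defined; I would either assume $\lambda_i>0$ (which is automatic for the leading pair under the full-rank hypothesis on $\X$ and $\Y$ whenever the views are not orthogonal) or restrict the statement to indices $i$ with $\lambda_i>0$, which is all that is needed for the subsequent convergence analysis to the top-$k$ subspace. Everything else is a routine substitution once the CCA eigenvalue equations are in hand.
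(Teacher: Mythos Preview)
Your proposal is correct and follows essentially the same route as the paper's own proof: a direct substitution into the \textit{AppGrad} updates, using the CCA eigenvalue relation $\sxy\psi_i=\lambda_i\sx\phi_i$ (which the paper packages as Lemma~\ref{decom}, $\sxy=\sx\px\lm\py^\top\sy$, and then evaluates at $\psi_i$ via $\py^\top\sy\py=I_p$). Your added remark on the degenerate case $\lambda_i=0$ is a reasonable caveat that the paper does not make explicit.
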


To prove the proposition, we need the following lemma that characterizes the relations among some key quantities. 
\begin{lemma}
\label{decom}
$\sxy=\sx\px \lm\py^\top\sy$
\end{lemma}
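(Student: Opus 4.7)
The plan is to derive the identity as a direct algebraic consequence of Lemma~\ref{def}. That lemma gives the SVD $\sxxx\sxy\syyy=\U\D\V^\top$ together with the relations $\px=\sxxx\U$, $\py=\syyy\V$, and $\lm=\D$. Solving for the singular vectors yields $\U=\sxx\px$ and $\V=\syy\py$; this inversion is legitimate because the full-rank assumption on $\X$ and $\Y$ makes $\sx$ and $\sy$ strictly positive definite, so the symmetric square roots $\sxx$ and $\syy$ are invertible.

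With these substitutions I would rewrite the SVD as
\[
\sxxx\sxy\syyy \;=\; \sxx\px\,\lm\,\py^\top\syy,
\]
then left-multiply by $\sxx$ and right-multiply by $\syy$. On the left, the products $\sxx\cdot\sxxx=I$ and $\syyy\cdot\syy=I$ collapse to produce $\sxy$, while on the right the same multiplications upgrade the inner $\sxx$ and $\syy$ to $\sx$ and $\sy$, leaving exactly $\sx\px\lm\py^\top\sy$. This gives the claimed identity.

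The only bookkeeping issue to be mindful of is the size convention in Lemma~\ref{def}: since $\px$ has $p=\min(p_1,p_2)$ columns, the SVD being used there is the compact one. Under the full-rank assumption, $\sxxx\sxy\syyy$ has rank exactly $p$, so the compact SVD reproduces the matrix without loss, and the substitution $\U=\sxx\px$, $\V=\syy\py$ is valid as an equality of $p_i\times p$ matrices. Apart from checking this, there is no real obstacle—the lemma is essentially Lemma~\ref{def} re-expressed in the original (un-whitened) coordinates, and the proof should be only a few lines.
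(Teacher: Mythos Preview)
Your proposal is correct and follows essentially the same argument as the paper: both invert the relations in Lemma~\ref{def} to get $\U=\sxx\px$, $\V=\syy\py$, substitute into the SVD $\sxxx\sxy\syyy=\U\D\V^\top$, and then clear the square-root factors by multiplying on the left and right by $\sxx$ and $\syy$. Your additional remarks about the compact SVD and the full-rank assumption are more careful than the paper's own one-line proof, but the core approach is identical.
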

\vspace{-0.5cm}
\begin{proof}[Proof of Lemma~\ref{decom}]
By Lemma~\ref{def}, $\bf{S_x^{-\frac{1}{2}}S_{xy}S_y^{-\frac{1}{2}}}=\U\D\V^\top$, where $\U=\sxx\px$, $\V=\syy\py$  and $\D=\lm$. Then we have $\sxy=\sxx\U\D\V^\top\syy=\sx\px \lm\py^\top\sy$.
\end{proof}
\begin{proof}[Proof of Proposition~\ref{fixpoint}]
Substitute $(\phi^t, \psi^t, \tpx^t,\tpy^t)=(\phi_i, \psi_i, \widetilde{\phi}_i, \widetilde{\psi}_i)$ into the iterative formula in Algorithm~\ref{alg:RG}. 
\begin{equation}
\begin{aligned}
\tpx^{t+1}&=\widetilde{\phi}_i-\eta_1(\sx\widetilde{\phi}_i-\sxy\psi_i)\\
&=\widetilde{\phi}_i-\eta_1(\sx\widetilde{\phi}_i-\sx\px \lm\py^\top\sy\psi_i)\\
&=\widetilde{\phi}_i-\eta_1(\sx\widetilde{\phi}_i-\lambda_i\sx\phi_i)\\
&=\widetilde{\phi}_i
\end{aligned}
\nonumber
\end{equation}
The second equality is direct application of Lemma~\ref{decom}. The third equality is due to the fact that $\py^\top\sy\py=I_p$. Then,
$$\phi^{t+1}=\widetilde{\phi}_i/ \|\widetilde{\phi}_i\|_x=\widetilde{\phi}_i/ \lambda_i=\phi_i$$
Therefore $(\tpx^{t+1}, \phi^{t+1})=(\tpx^{t}, \phi^{t})=(\widetilde{\phi}_i, \phi_i)$.
A symmetric argument will show that $(\tpy^{t+1}, \psi^{t+1})=(\tpy^{t}, \psi^{t})=(\widetilde{\psi}_i, \psi_i)$, which completes the proof. 
\end{proof}

The connection between \textit{AppGrad} and alternating minimization strategy is not instaneous. Intuitively, when $(\phi^t, \psi^t)$ is not close to $(\phi_1, \psi_1)$, solving the least squares completely as carried out in Algorithm~\ref{alg:ALS} is a waste of computational power (informally by regarding it as a second order method, the Newton Step has fast convergence only when current estimate is close to the optimum). Instead of solving a sequence of possibly unrelevant least squares, the following lemma shows that \textit{AppGrad} directly targets at the least squares that involves the leading canonical pair.

\begin{lemma} 
Let $(\phi_1, \psi_1)$ be the leading canonical pair and $(\widetilde{\phi}_1, \widetilde{\psi}_1)=\lambda_1(\phi_1, \psi_1)$. Then,
\begin{equation}
\begin{gathered}
\widetilde{\phi}_1=\arg\min\limits_{\phi}\frac{1}{2n} \|\X\phi-\Y\psi_1\|^2\\
\widetilde{\psi}_1=\arg\min\limits_{\psi}\frac{1}{2n} \|\Y\psi-\X\phi_1\|^2
\end{gathered}
\label{regression}
\end{equation}
\label{relationship}
\end{lemma}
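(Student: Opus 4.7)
The plan is to observe that both problems in \eqref{regression} are unconstrained quadratic minimizations in a single block variable, so each is solved in closed form by the normal equations. For the first one, setting the gradient $\X^\top(\X\phi-\Y\psi_1)/n$ to zero yields $\sx\phi=\sxy\psi_1$, so the unique minimizer is $\phi^\ast=\sx^{-1}\sxy\psi_1$. The task therefore reduces to verifying the algebraic identity $\sx^{-1}\sxy\psi_1=\lambda_1\phi_1$.

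To establish this identity, I would plug in the decomposition $\sxy=\sx\px\lm\py^\top\sy$ from Lemma~\ref{decom}, which gives $\sx^{-1}\sxy\psi_1=\px\lm\py^\top\sy\psi_1$. Next I would use the $\sy$-orthonormality relation $\py^\top\sy\py=I_p$, which follows from Lemma~\ref{def} because $\V^\top\V=I$ with $\V=\syy\py$. Since $\psi_1$ is the first column of $\py$, this relation gives $\py^\top\sy\psi_1=e_1$, the first standard basis vector. Consequently $\lm\py^\top\sy\psi_1=\lambda_1 e_1$ and $\px\lm\py^\top\sy\psi_1=\lambda_1\phi_1=\widetilde{\phi}_1$, as required. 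The claim for $\widetilde{\psi}_1$ follows from the completely symmetric computation with $\X$ and $\Y$ interchanged.

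There is essentially no hard step in this argument; the only thing to be careful about is making the orthonormality identity $\py^\top\sy\py=I_p$ explicit, since Lemma~\ref{def} states the SVD form but leaves this consequence implicit. Once this is spelled out, the proof is a two-line calculation for each half.
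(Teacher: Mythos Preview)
Your proposal is correct and follows essentially the same route as the paper: set the gradient to zero to obtain $\phi^\ast=\sx^{-1}\sxy\psi_1$, then invoke Lemma~\ref{decom} and the orthonormality $\py^\top\sy\py=I_p$ to reduce this to $\lambda_1\phi_1$. The only difference is that you spell out the intermediate step $\py^\top\sy\psi_1=e_1$ explicitly, whereas the paper collapses it into a single line.
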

\vspace{-0.5cm}
\begin{proof}[Proof of Lemma~\ref{relationship}] Let $\phi^*=\arg\min\limits_{\phi}\frac{1}{2n} \|\X\phi-\Y\psi_1\|^2$, by optimality condition, $\sx\phi^*=\sxy\psi_1$. Apply Lemma~\ref{decom}, 
$$\phi^*=\sx^{-1}\sx\px \lm\py^\top\sy\psi_1=\lambda_1\phi_1=\widetilde{\phi}_1$$
Similar argument gives $\psi^*=\widetilde{\psi}_1$
\end{proof}

Lemma~\ref{relationship} characterizes the relationship between leading canonical pair $(\phi_1, \psi_1)$ and its unnormalized counterpart $(\widetilde{\phi}_1, \widetilde{\psi}_1)$, which sheds some insight on how \textit{AppGrad} works. The intuition is that $(\phi^t, \psi^t)$ and $(\tpx^{t}, \tpy^{t})$ are current estimations of $(\phi_1, \psi_1)$ and $(\widetilde{\phi}_1, \widetilde{\psi}_1)$, and the updates of $(\tpx^{t+1}, \tpy^{t+1})$ in Algorithm~\ref{alg:RG} are actually gradient steps of the least squares in \eqref{regression}, with the unknown truth $(\phi_1, \psi_1)$ approximated by $(\phi^t, \psi^t)$. In terms of mathematics, 
\begin{equation}
\begin{aligned}
\widetilde{\phi}^{t+1}&=\tpx^t-\eta_1\X^\top(\X\tpx^t-\Y\psi^t)/n\\
&\approx \tpx^t-\eta_1\X^\top(\X\tpx^t-\Y\psi_1)/n\\
&=\widetilde{\phi}^{t}-\eta_1\nabla_{\phi}\frac{1}{2n} \|\X\phi-\Y\psi_1\|^2|_{\phi=\widetilde{\phi}^{t}}
\end{aligned}
\label{approx}
\end{equation}
The normalization step in Algorithm~\ref{alg:RG} corresponds to generating new approximations of $(\phi_1, \psi_1)$, namely $(\phi^{t+1}, \psi^{t+1})$, using the updated $(\tpx^{t+1}, \tpy^{t+1})$ through the relationship $(\phi_1, \psi_1)=(\widetilde{\phi}_1/\|\widetilde{\phi}_1\|_x, \,\widetilde{\psi}_1/\|\widetilde{\psi}_1\|_y)$. Therefore, one can interpret \textit{AppGrad} as approximate gradient scheme for solving \eqref{regression}. When $(\tpx^t, \tpy^t)$ converge to $(\tpx_1, \tpy_1)$, its scaled version $(\phi^{t}, \psi^{t})$ converge to the leading canonical pair $(\phi_1, \psi_1)$. 

The following theorem shows that when the estimates enter a neighborhood of the true canonical pair, \textit{AppGrad} is contractive. Define the error metric $e_t=\|\ttx^t\|^2+\|\tty^t\|^2$ where $\ttx^t=\tpx^t-\tpx_1, \tty^t=\tpy^t-\tpy_1$.

\begin{theorem}
Assume $\lambda_1>\lambda_2$, $\exists\, L_1, L_2\geq 1$ such that $\lambda_{max}(\sx), \lambda_{max}(\sy)\leq L_1$ and $\lambda_{min}(\sx), \lambda_{min}(\sy)\geq  L_2^{-1}$, where $\lambda_{min}(\cdot), \lambda_{max}(\cdot)$ denote smallest and largest eigenvalues. If $e_0< 2(\lambda_1^2-\lambda_2^2)/L_1$ and set $\eta_1=\eta_2=\eta=\delta/6L_1$, then AppGrad achieves linear convergence such that $\forall\, t\in\mathbb{N}_+$
\begin{equation}
e_t\leq  \Big(1-\frac{\delta^2}{6L_1L_2}\Big)^t e_0
\nonumber
\end{equation}
where $\delta=1-\Big(1-\frac{2(\lambda_1^2-\lambda_2^2)-L_1e_0}{2\lambda_1^2}\Big)^\frac{1}{2}>0$
\label{thm}
\end{theorem}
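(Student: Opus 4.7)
My plan is to view \textit{AppGrad} as perturbed gradient descent on two coupled quadratic objectives and establish contraction by combining the strong-convexity/smoothness of the unperturbed least-squares steps with a careful Taylor analysis of the normalization-induced perturbation. The starting point is a clean recursion for the error: since Proposition~\ref{fixpoint} together with Lemma~\ref{decom} yields $\sx\tpx_1=\sxy\psi_1$ and $\sy\tpy_1=\syx\phi_1$, subtracting these from the \textit{AppGrad} update gives
\begin{equation*}
\ttx^{t+1}=(I-\eta\sx)\ttx^{t}+\eta\,\sxy(\psi^{t}-\psi_1),
\end{equation*}
with a symmetric equation for $\tty^{t+1}$. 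The first (linear) term contracts by a factor $(1-\eta/L_2)$ in $L_2$ norm, using the eigenvalue bounds together with $\eta L_1\le 1$.

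The crucial piece is bounding the perturbation $\sxy(\psi^{t}-\psi_1)$. I would Taylor-expand the normalization map $\tpy\mapsto\tpy/\!\|\tpy\|_y$ around $\tpy_1$ to obtain
\begin{equation*}
\psi^{t}-\psi_1=\tfrac{1}{\lambda_1}(I-\psi_1\psi_1^{\top}\sy)\tty^{t}+R(\tty^{t}),
\end{equation*}
where $R$ is quadratic in $\|\tty^t\|$. The key algebraic observation, which is where the spectral gap enters, is that composing $\sxy$ with the $\sy$-orthogonal projector that removes $\psi_1$ kills the top canonical mode: from Lemma~\ref{decom}, $\sxy(I-\psi_1\psi_1^{\top}\sy)=\sx\px\,\mathrm{diag}(0,\lambda_2,\ldots,\lambda_p)\py^{\top}\sy$, so its operator norm scales like $\lambda_2$ (times condition-number factors) rather than $\lambda_1$. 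After combining with the $1/\lambda_1$ from the normalization step, the first-order perturbation on $\ttx^{t+1}$ is effectively $\eta(\lambda_2/\lambda_1)$ times an $\sy$-projection of $\tty^{t}$, i.e.\ it is shrunk by the spectral-gap ratio.

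Summing the resulting squared-norm recursions for $\ttx^{t+1}$ and $\tty^{t+1}$, and absorbing the quadratic Taylor remainder $R$ using the initialization bound $e_0<2(\lambda_1^2-\lambda_2^2)/L_1$, I would arrive at
\begin{equation*}
e_{t+1}\le\Big(1-\tfrac{\delta^2}{6L_1L_2}\Big)e_t,
\end{equation*}
with $\delta$ precisely the gap-ratio quantity stated in the theorem. A straightforward induction then propagates the contraction to all $t$ while simultaneously certifying that $e_t$ stays small enough for the Taylor expansion to remain valid. The main obstacle I expect is exactly this last bookkeeping step: turning the informal ``$\lambda_2/\lambda_1$ shrinkage versus $1/L_2$ contraction'' trade-off into the exact step-size choice $\eta=\delta/(6L_1)$ and the exact rate $1-\delta^2/(6L_1L_2)$. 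The specific form of $\delta=1-(1-(2(\lambda_1^2-\lambda_2^2)-L_1e_0)/(2\lambda_1^2))^{1/2}$ strongly suggests that the natural Lyapunov quantity is a carefully weighted combination of the components of $\ttx^t,\tty^t$ along the leading versus orthogonal canonical directions, and discovering the combination that makes all cross terms cancel cleanly is where the real work lies.
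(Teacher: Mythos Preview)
Your proposal is structurally correct and identifies all the right ingredients: the exact error recursion $\ttx^{t+1}=(I-\eta\sx)\ttx^{t}+\eta\,\sxy(\psi^{t}-\psi_1)$, the spectral-gap mechanism via the projector identity $\sxy(I-\psi_1\psi_1^{\top}\sy)=\sx\px\,\mathrm{diag}(0,\lambda_2,\ldots,\lambda_p)\py^{\top}\sy$, and the need to square, sum over $x,y$, and induct. The paper follows the same skeleton but executes two steps differently, and both differences make the constants fall out more directly.

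First, the paper does \emph{not} Taylor-expand the normalization map. Instead it uses an exact geometric inequality: with $\alpha_1=\cos_y(\psi^t,\psi_1)$ one has $\|\ty^t\|_y^2=2(1-\alpha_1)$ and $\|\tty^t\|_y^2\ge\lambda_1^2(1-\alpha_1^2)$, hence $\|\ty^t\|_y\le\lambda_1^{-1}\sqrt{2/(1+\alpha_1)}\,\|\tty^t\|_y$. This replaces your first-order term plus remainder $R$ by a single closed-form bound, and the $e_0$-dependence of $\delta$ comes cleanly from $1-\alpha_1\le\|\tty^t\|_y^2/\bigl(\lambda_1^2(1+\alpha_1)\bigr)$ rather than from controlling a Taylor tail. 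Second, for the cross term $(\ttx^t)^{\top}\sxy\ty^t$ the paper expands $\syy\psi^t=\sum_i\alpha_i\syy\psi_i+\alpha_\perp\psi_\perp$ and applies Cauchy--Schwarz to get a bound of the form $\tfrac12\bigl(\|\tty^t\|_y^2/\lambda_1^2+\lambda_2^2/\lambda_1^2\bigr)^{1/2}\bigl(\|\ttx^t\|_x^2+\|\tty^t\|_y^2\bigr)$; this is equivalent to your projector identity but is exactly what, after the elementary $\sqrt a+\sqrt b\le\sqrt{2(a+b)}$, produces the stated $\delta$.

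Your final worry about a weighted Lyapunov is unnecessary. The unweighted $e_t=\|\ttx^t\|^2+\|\tty^t\|^2$ works as is: once you square the recursion (rather than first bounding $\|(I-\eta\sx)\ttx^t\|$ by $(1-\eta/L_2)\|\ttx^t\|$), the contraction appears as the explicit term $-2\eta\bigl(\|\ttx^t\|_x^2+\|\tty^t\|_y^2\bigr)$, the cross terms are symmetric under $x\leftrightarrow y$, and the choice $\eta=\delta/(6L_1)$ balances them against the $\eta^2$ term. The induction is then one line, since $e_1\le e_0$ keeps the same $\delta$ valid for all $t$.
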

\begin{remark}
The theorem reveals that the larger is the eigengap $\lambda_1-\lambda_2$, the broader is the contraction region. We didn't try to optimize the conditions above and empirically as shown in the experiments, a randomized initialization always suffices to capture most of the correlations. 
\end{remark}

\subsection{General Rank-$k$ Case}
\begin{algorithm}[tb]
   \caption{CCA via \textit{AppGrad} (Rank-$k$)}
   \label{alg:RG-rank-k}
\begin{algorithmic}
   \STATE {\bfseries Input:} Data matrix  $\X\in\mathbb{R}^{n\times p_1}, \Y\in \mathbb{R}^{n\times p_2}$, initialization $(\px^0,\py^0, \widetilde{\px}^0, \widetilde{\py}^0)$, step size $\eta_1,\eta_2$
   \STATE {\bf Output :} $(\px_{\textsc{Ag}}, \py_{\textsc{Ag}}, \widetilde{\px}_{\textsc{Ag}}, \widetilde{\py}_{\textsc{Ag}})$ 
   \REPEAT
    \STATE  $\widetilde{\px}^{t+1}=\widetilde{\px}^t-\eta_1\X^\top(\X\widetilde{\px}^t-\Y\py^t)/n$
    \STATE SVD: $(\widetilde{\px}^{t+1})^\top\sx\widetilde{\px}^{t+1}={\bf U}_x\D_x{\bf U}_x^\top$
    \STATE $\px^{t+1}=\widetilde{\px}^{t+1}{\bf U}_x\D_x^{-\frac{1}{2}}\U_x^\top$
    \STATE  $\widetilde{\py}^{t+1}=\widetilde{\py}^t-\eta_2\Y^\top(\Y\widetilde{\py}^t-\X\px^t)/n$
   \STATE SVD: $(\widetilde{\py}^{t+1})^\top\sy\widetilde{\py}^{t+1}={\bf U}_y\D_y{\bf U}_y^\top$
   \STATE $\py^{t+1}=\widetilde{\py}^{t+1}{\bf U}_y\D_y^{-\frac{1}{2}}\U_y^\top$
   \UNTIL{convergence}
\end{algorithmic}
\end{algorithm}
Following the spirit of rank-one case, \textit{AppGrad} can be easily generalized to compute the top $k$ dimesional canonical subspace as summarized in Algorithm~\ref{alg:RG-rank-k}. The only difference is that the original scalar normalization is replaced by its matrix counterpart, that is to multiply the inverse of the square root matrix $\px^{t+1}=\widetilde{\px}^{t+1}{\bf U}_x\D_x^{-\frac{1}{2}}\U_x^\top$, ensuring that  $(\px^{t+1})^\top\X^\top\X\px^{t+1}={\bf I}_k$. 

Notice that the gradient step only involves a large matrix multiplying a thin matrix of width $k$ and the SVD  is  performed on a small $k\times k$ matrix. Therefore, the computational complexity per iteration is dominated by the gradient step, of order $O(n(p_1+p_2)k)$. The cost will be further reduced when the data matrices $\X, \Y$ are sparse. 

Compared with classical spectral agorithm which first whitens the data matrices and then performs a SVD on the whitened covariance matrix, \textit{AppGrad} actually merges these two steps together. This is the key of its efficiency. In a high level, whitening the whole data matrix is not necessary and we only want to whiten the directions that contain the leading CCA subspace. However, these directions are unknown and therefore for two-step procedures, whitening the whole data matrix is unavoidable. Instead, \textit{AppGrad} tries to identify (gradient step) and whiten (normalization step) these directions simultaneously. In this way, every normalization step is only performed on the potential $k$ dimensional target CCA subspace and therefore only deals with a small $k\times k$ matrix. 

Parallel results of Lemma~\ref{optformu}, Proposition~\ref{fixgd}, Proposition~\ref{fixpoint}, Lemma~\ref{relationship} for this general scenario can be established in a similar manner. Here, to make Algorithm~\ref{alg:RG-rank-k} more clear, we state the fixed point result of which the proof is similar to Proposition~\ref{fixpoint}. 
\begin{proposition}
\label{fixpoint-k}
Let $\lm_k=diag(\lambda_1, \cdots, \lambda_k)$ be the diagonal matrix of top $k$ canonical correlations and let $\px_k=(\phi_1, \cdots, \phi_k), \py_k=(\phi_1, \cdots, \phi_k)$ be the top $k$ CCA vectors. Also denote $\widetilde{\px}_k=\px_k\lm_k$ and $\widetilde{\py}_k=\py_k\lm_k$. Then for any $k\times k$ orthogonal matrix $\Q$, $(\px_k, \py_k, \widetilde{\px}_k,  \widetilde{\py}_k)\Q$ is a fixed point of \textit{AppGrad} scheme. 
\end{proposition}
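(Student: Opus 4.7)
The plan is to mimic the rank-one argument of Proposition~\ref{fixpoint} but to use matrix algebra, and the one new ingredient is verifying that the eigendecomposition-based normalization step $\px^{t+1}=\widetilde{\px}^{t+1}\U_x\D_x^{-1/2}\U_x^\top$ is really the unique symmetric positive-definite matrix $((\widetilde{\px}^{t+1})^\top\sx\widetilde{\px}^{t+1})^{-1/2}$, which makes it transparent how an arbitrary orthogonal $\Q$ passes through.

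First I would substitute $(\px^t,\py^t,\widetilde{\px}^t,\widetilde{\py}^t)=(\px_k\Q,\py_k\Q,\widetilde{\px}_k\Q,\widetilde{\py}_k\Q)$ into the gradient step. Using Lemma~\ref{decom} together with $\py^\top\sy\py_k=[I_k;\,0]^\top$ (which is the top-$k$ slice of the identity $\py^\top\sy\py=I_p$), one immediately gets $\sxy\py_k=\sx\px_k\lm_k$, so the update reduces to
\begin{equation*}
\widetilde{\px}^{t+1}=\widetilde{\px}_k\Q-\eta_1\bigl(\sx\px_k\lm_k\Q-\sx\px_k\lm_k\Q\bigr)=\widetilde{\px}_k\Q.
\end{equation*}
So the gradient step leaves $\widetilde{\px}^t$ untouched, exactly as in the rank-one case.

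Next I would handle the normalization. Using $\px_k^\top\sx\px_k=I_k$, one computes
\begin{equation*}
(\widetilde{\px}^{t+1})^\top\sx\widetilde{\px}^{t+1}=\Q^\top\lm_k\,\px_k^\top\sx\px_k\,\lm_k\Q=\Q^\top\lm_k^2\Q.
\end{equation*}
The key observation is that $\U_x\D_x^{-1/2}\U_x^\top$ defined through any eigendecomposition $\U_x\D_x\U_x^\top$ of this symmetric positive-definite matrix is nothing but its unique PSD square-root inverse, namely $(\Q^\top\lm_k^2\Q)^{-1/2}=\Q^\top\lm_k^{-1}\Q$. Substituting,
\begin{equation*}
\px^{t+1}=\widetilde{\px}_k\Q\cdot\Q^\top\lm_k^{-1}\Q=\px_k\lm_k\lm_k^{-1}\Q=\px_k\Q.
\end{equation*}
A completely symmetric computation yields $\widetilde{\py}^{t+1}=\widetilde{\py}_k\Q$ and $\py^{t+1}=\py_k\Q$, establishing the fixed-point property.

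The main subtlety I would flag is precisely the one just used: when the canonical correlations on the diagonal of $\lm_k$ collapse (e.g., repeated singular values), the eigendecomposition $\U_x\D_x\U_x^\top$ is not unique, so one cannot argue at the level of $\U_x$ alone. What saves the proof is that $\U_x\D_x^{-1/2}\U_x^\top$ is invariant under all such choices since it is just the inverse symmetric square root of a fixed PSD matrix; this is what lets an arbitrary orthogonal $\Q$ thread cleanly through the normalization and reproduce $\px_k\Q$ on the nose. Everything else is a direct matrix version of the rank-one bookkeeping in Proposition~\ref{fixpoint}.
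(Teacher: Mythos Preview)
Your proof is correct and follows essentially the same route as the paper's: substitute into the gradient step, invoke Lemma~\ref{decom} and $\py^\top\sy\py=I_p$ to cancel, then compute $(\widetilde{\px}^{t+1})^\top\sx\widetilde{\px}^{t+1}=\Q^\top\lm_k^2\Q$ and read off $\px^{t+1}=\px_k\Q$. Your explicit remark that $\U_x\D_x^{-1/2}\U_x^\top$ is the unique PSD inverse square root (hence well-defined even when $\lm_k$ has repeated entries) is a point the paper's proof takes for granted, so your write-up is in fact slightly more careful on that step.
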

The top $k$ dimensional canonical subspace is identifiable up to a rotation matrix and Proposition \ref{fixpoint-k} shows that every optimum is a fixed point of \textit{AppGrad} scheme. 
\subsection{Kernelization}
Sometimes CCA is restricted because of its linearity and kernel CCA offers an alternative by projecting data into a high dimensional feature space. In this section, we show that \textit{AppGrad} works for kernel CCA as well. Let $K_{\mathcal{X}}(\cdot, \cdot)$ and $K_{\mathcal{Y}}(\cdot, \cdot)$ be Mercer kernels, then there exists feature mappings $f_{\mathcal{X}}:\mathcal{X}\rightarrow \mathcal{F}_{\mathcal{X}}$ and $f_{\mathcal{Y}}:\mathcal{Y}\rightarrow \mathcal{F}_{\mathcal{Y}}$ such that $K_{\mathcal{X}}(x_i, x_j)=\left\langle f_{\mathcal{X}}(x_i), f_{\mathcal{X}}(x_j)\right\rangle$ and $K_{\mathcal{Y}}(y_i, y_j)=\left\langle f_{\mathcal{Y}}(y_i), f_{\mathcal{Y}}(y_j)\right\rangle$. Let $\F_\mathcal{X}=(f_\mathcal{X}(x_1), \cdots, f_\mathcal{X}(x_n))^\top$ and $\F_\mathcal{Y}=(f_\mathcal{Y}(y_1), \cdots, f_\mathcal{Y}(y_n))^\top$ be the compact representation of the objects in the possibly infinite dimensional feature space. Since the top $k$ dimensional canonical vectors lie in the space spaned by the features, say $\px_k=\F_{\mathcal{X}}^\top\W_{\mathcal{X}}$ and $\py_k=\F_{\mathcal{Y}}^\top\W_{\mathcal{Y}}$ for some $\W_{\mathcal{X}}, \W_{\mathcal{Y}}\in \mathbb{R}^{n\times k}$. Let $\K_\mathcal{X}=\F_{\mathcal{X}}\F_{\mathcal{X}}^\top, \K_\mathcal{Y}=\F_{\mathcal{Y}}\F_{\mathcal{Y}}^\top$ be the Gram matrices. Similar to Lemma~\ref{optformu}, kernel CCA can be formulated as
\begin{equation}
\begin{gathered}
\arg\max\limits_{\W_{\mathcal{X}}, \W_{\mathcal{Y}}} \quad \|\K_{\mathcal{X}}\W_{\mathcal{X}}-\K_{\mathcal{Y}}\W_{\mathcal{Y}}\|_F^2\\
\mbox{subject\, to} \; \W_{\mathcal{X}}^\top\K_{\mathcal{X}}\K_{\mathcal{X}}\W_{\mathcal{X}}={\bf I}_k \quad \W_{\mathcal{Y}}^\top\K_{\mathcal{Y}}\K_{\mathcal{Y}}\W_{\mathcal{Y}}={\bf I}_k
\end{gathered}
\nonumber
\end{equation}
Following the same logic as Proposition~\ref{fixpoint-k}, a similar fixed point result can be proved. Therefore, Algorithm~\ref{alg:RG-rank-k} can be directly applied to compute $\W_{\mathcal{X}}, \W_{\mathcal{Y}}$ by simply replacing $\X, \Y$ with $\K_\mathcal{X}, \K_\mathcal{Y}$.

\section{Stochastic \textit{AppGrad}}
Recently, there is a growing interest in stochastic optimization which is shown to have better performance for large-scale learning problems \cite{bousquet2008tradeoffs, bottou10}. Especially in the so-called `data laden regime', where data is abundant and the bottleneck is runtime, stochastic optimization dominate batch algorithms both empirically and theoretically. Given these advantages, lots of efforts have been spent on developing stochastic algorithms for principal component analysis \cite{oja1985stochastic, arora2012stochastic, mitliagkas2013memory, balsubramani2013fast}. Despite promising progress in PCA, as mentioned in \cite{arora2012stochastic}, stochastic CCA is more challenging and remains an open problem due to the whitening step. 

\begin{algorithm}[tb]
   \caption{CCA via Stochastic \textit{AppGrad} (Rank-$k$)}
      \label{alg:SGD}
\begin{algorithmic}
   \STATE {\bfseries Input:} Data matrix  $\X\in\mathbb{R}^{n\times p_1}, \Y\in \mathbb{R}^{n\times p_2}$, initialization $(\px^0,\py^0, \widetilde{\px}^0, \widetilde{\py}^0)$, step size $\eta_{1t},\eta_{2t}$, minibatch size $m$
   \STATE {\bf Output :} $(\px_{\textsc{Sag} }, \py_{\textsc{Sag}}, \widetilde{\px}_{\textsc{Sag}}, \widetilde{\py}_{\textsc{Sag}})$ 
   \REPEAT
    \STATE Randomly pick a subset $\mathcal{I}\subset\{1, 2, \cdots, n\}$ of size $m$
    \STATE  $\widetilde{\px}^{t+1}=\widetilde{\px}^t-\eta_{1t}\X_{\mathcal{I}}^\top(\X_{\mathcal{I}}\widetilde{\px}^t-\Y_{\mathcal{I}}\py^t)/m$
    \STATE SVD: $(\widetilde{\px}^{t+1})^\top(\frac{1}{m}\X_{\mathcal{I}}^\top\X_{\mathcal{I}})\widetilde{\px}^{t+1}={\bf U}_x^\top\D_x{\bf U}_x$
    \STATE $\px^{t+1}=\widetilde{\px}^{t+1}{\bf U}_x^\top\D_x^{-\frac{1}{2}}{\bf U}_x$
    \STATE  $\widetilde{\py}^{t+1}=\widetilde{\py}^t-\eta_{2t}\Y_{\mathcal{I}}^\top(\Y_{\mathcal{I}}\widetilde{\py}^t-\X_{\mathcal{I}}\px^t)/m$
    \STATE SVD: $(\widetilde{\py}^{t+1})^\top(\frac{1}{m}\Y_{\mathcal{I}}^\top\Y_{\mathcal{I}})\widetilde{\py}^{t+1}={\bf U}_y^\top\D_y{\bf U}_y$
     \STATE $\py^{t+1}=\widetilde{\py}^{t+1}{\bf U}_y^\top\D_y^{-\frac{1}{2}}{\bf U}_y$
   \UNTIL{convergence}
\end{algorithmic}
\end{algorithm}

As a gradient scheme, \textit{AppGrad} naturally generalizes to the stochastic regime and we summarize in Algorithm~\ref{alg:SGD}. Compared with the batch version, only a small subset of samples are used to compute the gradient, which reduces the computational cost per iteration from $O(n(p_1+p_2)k)$ to $O(m(p_1+p_2)k)$ ($m=|\mathcal{I}|$ is the size of the minibatch). Empirically, this makes stochastic \textit{AppGrad} much faster than the batch version as we will see in the experiments. Also, for large scale applications when fully calculating the CCA subspace is prohibitive, stochastic \textit{AppGrad} can generate a decent approximation given a fixed computational power, while other algorithms only give a one-shot estimate after the whole procedure is carried out completely. Moreover, when there is a generative model, as shown in \cite{bousquet2008tradeoffs}, due to the tradeoff between statistical and numerical accuracy, fully solving an empirical risk minimization is unnecessary since the statistical error will finally dominate. On the contrary, stochastic optimization directly tackles the problem in the population level and therefore is more statistically efficient. 

It is worth mentioning that the normalization step is accomplished using a sampled Gram matrix $\frac{1}{m}\X_{\mathcal{I}}^\top\X_{\mathcal{I}}$ and $\frac{1}{m}\Y_{\mathcal{I}}^\top\Y_{\mathcal{I}}$. A key observation is that when $m\in O(k)$, $(\widetilde{\px}^{t+1})^\top(\frac{1}{m}\X_{\mathcal{I}}^\top\X_{\mathcal{I}})\widetilde{\px}^{t+1}\approx (\widetilde{\px}^{t+1})^\top(\frac{1}{m}\X^\top\X)\widetilde{\px}^{t+1}$ using standard concentration inequality, because the matrix we want to approximate $(\widetilde{\px}^{t+1})^\top(\frac{1}{m}\X^\top\X)\widetilde{\px}^{t+1}$ is a $k\times k$ matrix, while generally $O(p)$ sample is needed to have $\frac{1}{m}\X_{\mathcal{I}}^\top\X_{\mathcal{I}}\approx \frac{1}{n}\X^\top\X$. As we have argued in previous section, this bonus is a byproduct of the fact that \textit{AppGrad} tries to identify and whiten the directions that contains the CCA subspace simultaneously, or else $O(p)$ samples are necessary for whitening the whole data matrices.

\section{Experiments}
In this section, we present experiments on four real datasets to evaluate the effectiveness of the proposed algorithms for computing the top 20 ($k$=20) dimensional canonical subspace. A short summary of the datasets is in Table~\ref{datasets}. 

\begin{table*}[t]
\caption{Brief Summary of Datasets}
\vskip 0.15in
\begin{center}
\begin{sc}
\begin{tabular}{|c|c|c|c|c|}
\hline
Datasets & Description & $p_1$&$p_2$&$n$\\
\hline
Mediamill &Image and its labels & $100$ & $120$& $30,000$\\

Mnist&left and right halves of images& $392$ & $392$ &$60,000$\\

Penn Tree Bank&Word Co-ocurrance& $10,000$&$10,000$&$500,000$\\

URL Reputation&Host and lexical based features& $100,000$&$100,000$&$1,000,000$\\
\hline
\end{tabular}
\end{sc}
\end{center}
\label{datasets}
\vskip -0.1in
\end{table*}

\textbf{Mediamill} is an annotated video dataset from the Mediamill Challenge \cite{snoek2006challenge}. Each image is a representative keyframe of a video shot annotated with 101 labels and consists of 120 features. CCA is performed to explore the correlation structure between the images and its labels. 

\textbf{MNIST} is a database of handwritten digits. CCA is used to learn correlated representations between the left and right halves of the images. 

\textbf{Penn Tree Bank} dataset is extracted from Wall Street Journal, which consists of $1.17$ million tokens and a vocabulary size of $43, 000$ \cite{lamar10}. CCA has been successfully used on this dataset to build low dimensional word embeddings \cite{dhillon11,dhillon12}. The task here is a CCA between words and their context. We only consider the 10, 000 most frequent words to avoid sample sparsity.

\textbf{URL Reputation} dataset \cite{ma09} is extracted from UCI machine learning repository. The dataset contains 2.4 million URLs each represented by 3.2 million features. For simplicity we only use the first 2 million samples.  $38\%$ of the features are host based features like WHOIS info, IP prefix and $62\%$ are lexical based features like Hostname and Primary domain. We run a CCA between a subset of host based features and a subset of lexical based features.

\subsection{Implementations}
\textbf{Evaluation Criterion}: The evaluation criterion we use for the first three datasets (Mediamill, MNIST, Penn Tree Bank) is \textit{Proportions of Correlations Captured} (PCC). To introduce this term, we first define \textit{Total Correlations Captured} (TCC) between two matrices to be the sum of their canonical correlations as defined in Lemma~\ref{def}. Then, for estimated top $k$ dimensional canonical subspace $\widehat{\px}_k, \widehat{\py}_k$ and true leading $k$ dimensional CCA subspace $\px_k, \py_k$, PCC is defined as 
\begin{equation}
\mbox{PCC}=\frac{\mbox{TCC}(\X\widehat{\px}_k, \Y\widehat{\py}_k)}{\mbox{TCC}(\X\px_k, \Y\py_k)}
\nonumber
\end{equation}
Intuitively PCC characterizes the proportion of correlations captured by certain algorithm compared with the true CCA subspace. Therefore, the higher is PCC the better is the estimated CCA subspace. This criterion has two major advantages over subspace distance $\|P_{\widehat{\px}_k}- P_{\px_k}\|$ ($P_\Omega$ is projection matrix of the column space of $\Omega$). First, it is more natural and relevant considering that the goal of CCA is to capture most correlations between two data matrices. Second, when the eigengap $\Delta\lambda=\lambda_k-\lambda_{k+1}$ is not big enough, the top $k$ dimensional CCA subspace is ill posed while the correlations captured is well defined. 

We use the output of the standard spectral algorithms as the truth $(\px_k, \py_k)$ to calculate the denominator of PCC. However, for URL Reputation dataset, the number of samples and features are too large for the algorithm to compute the true CCA subspace in a reasonable amount of time and instead we only compare the numerator $\mbox{TCC}(\X\widehat{\px}_k, \Y\widehat{\py}_k)$ (monotone w.r.t. PCC) for different algorithms.

\textbf{Initialization} We initialize $(\px^0,\py^0)$ by first drawing $i.i.d.$ samples from standard Gaussian distribution and then normalize such that $(\px^0)^\top\sx\px^0=I_k$ and $(\py^0)^\top\sy\py^0=I_k$

\textbf{Stepsize} For both \textit{AppGrad} and stochastic \textit{AppGrad}, a small part of the training set is held out and cross-validation is used to choose the step size adaptively. 

\textbf{Regularization} For all the algorithms, a little regularization is added for numerical stability which means we replace Gram matrix $\X^\top\X$ with $\X^\top\X+\lambda {\bf I}$ for some small positive $\lambda$. 

\textbf{Oversampling} Oversampling means when aiming for top $k$ dimensional subspace, people usually computes top $k+l$ dimesional subspace from which a best $k$ diemsional subspace is extracted. In practice, $l=5\sim 10$ suffices to improve the performance. We only do a oversampling of 5 in the URL dataset. 

\subsection{Summary of Results}
For the first three datasets (Mediamill, MNIST, Penn Tree Bank), both in-sample and out-of-sample PCC are computed for \textit{AppGrad} and Stochastic \textit{AppGrad} as summarized in Figure\ref{fig:figure}. As you can see, both algorithms nearly capture most of the correlations compared with the true CCA subspace and stochastic \textit{AppGrad} consistently achieves same PCC with much less computational cost than its batch version. Moreover, the larger is the size of the data, the bigger advantage will stochastic \textit{AppGrad} obtain. One thing to notice is that, as revealed in Mediamill dataset, out-of-sample PCC is not necessarily less than in-sample PCC because both denominator and numerator will change on the hold out set. 
\begin{figure*}
\centering
\subfigure{%
\includegraphics[width=0.65\columnwidth]{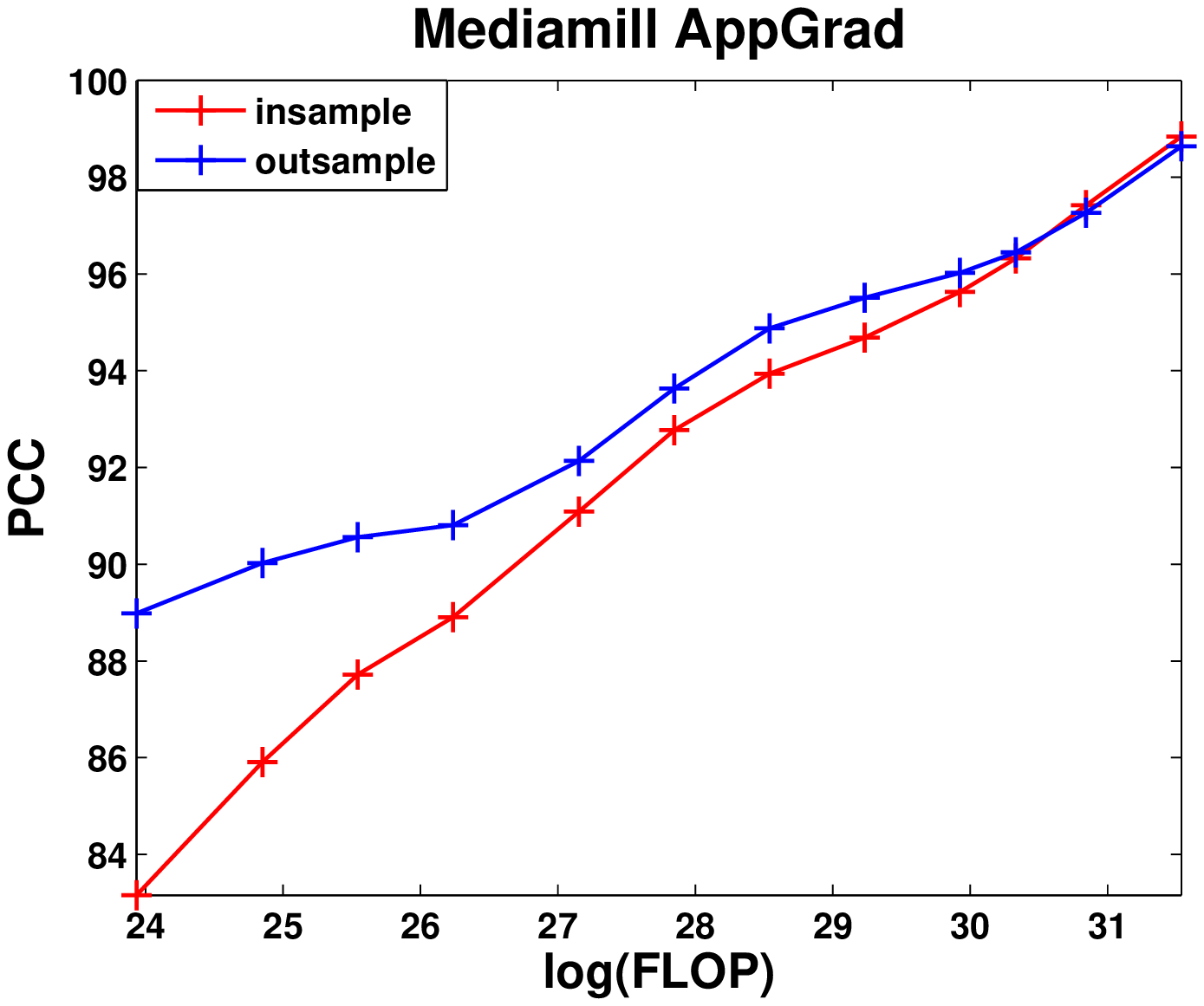}
}
\subfigure{%
\includegraphics[width=0.65\columnwidth]{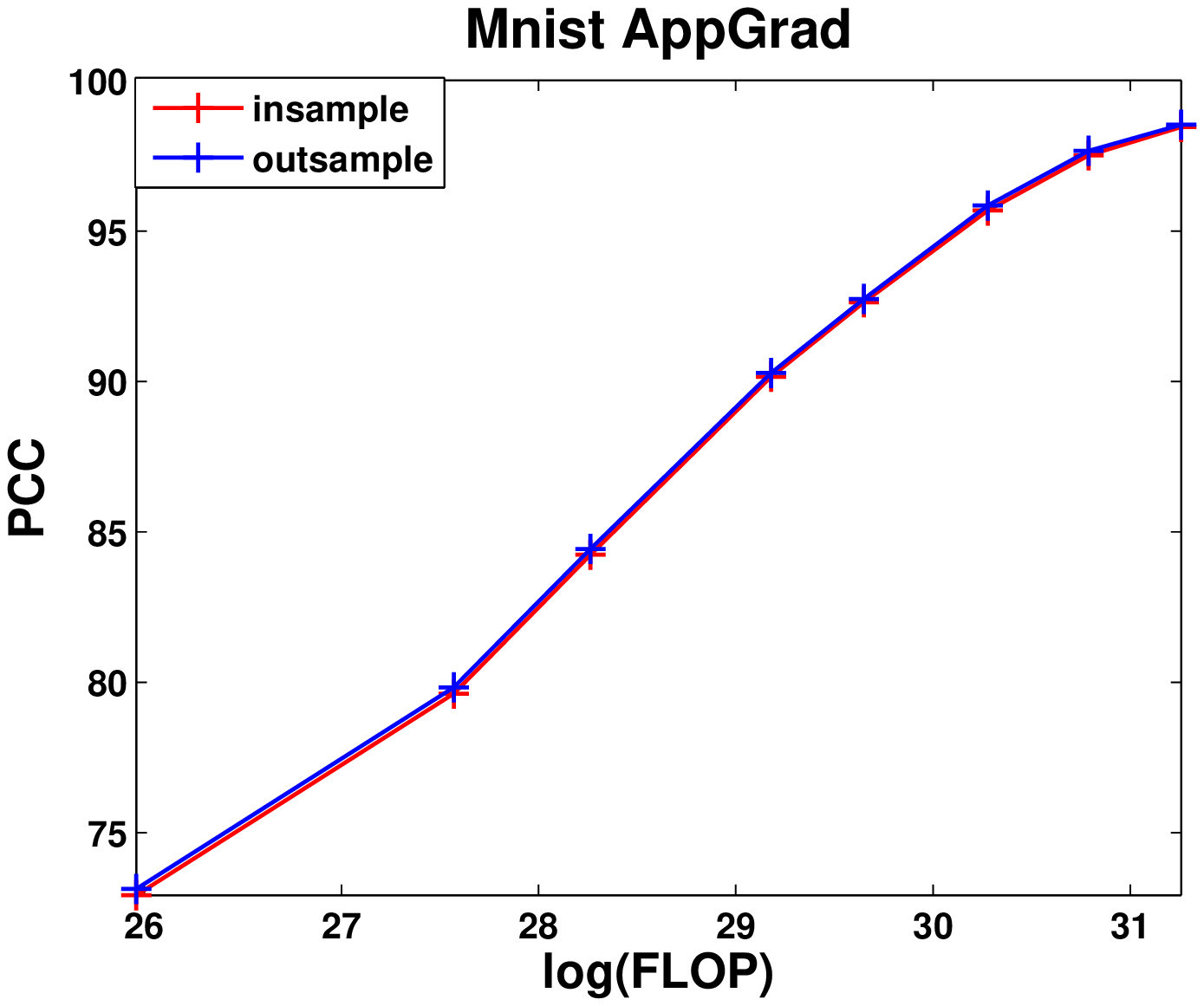}
}
\subfigure{%
\includegraphics[width=0.65\columnwidth]{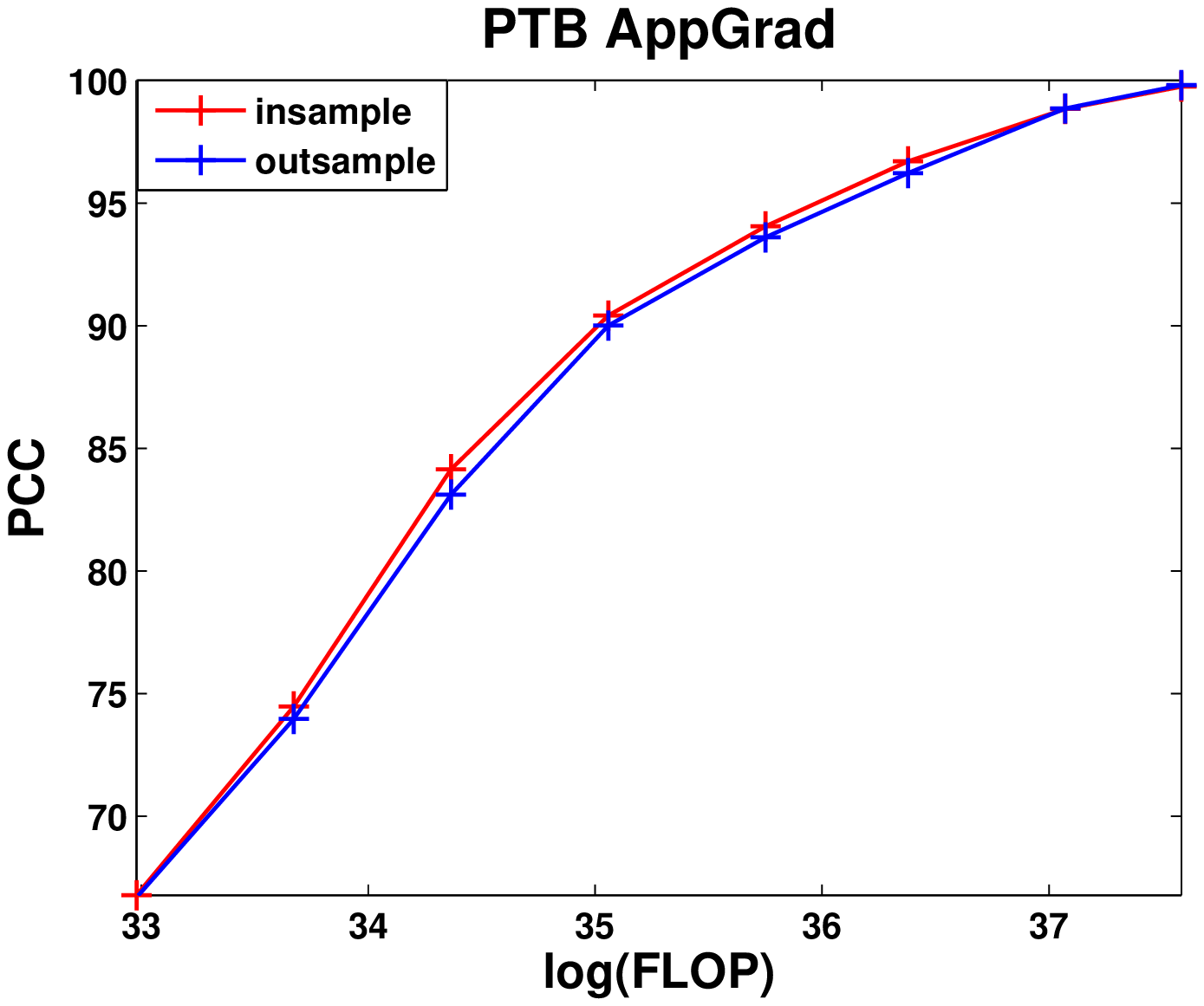}
}

\subfigure{%
\includegraphics[width=0.65\columnwidth]{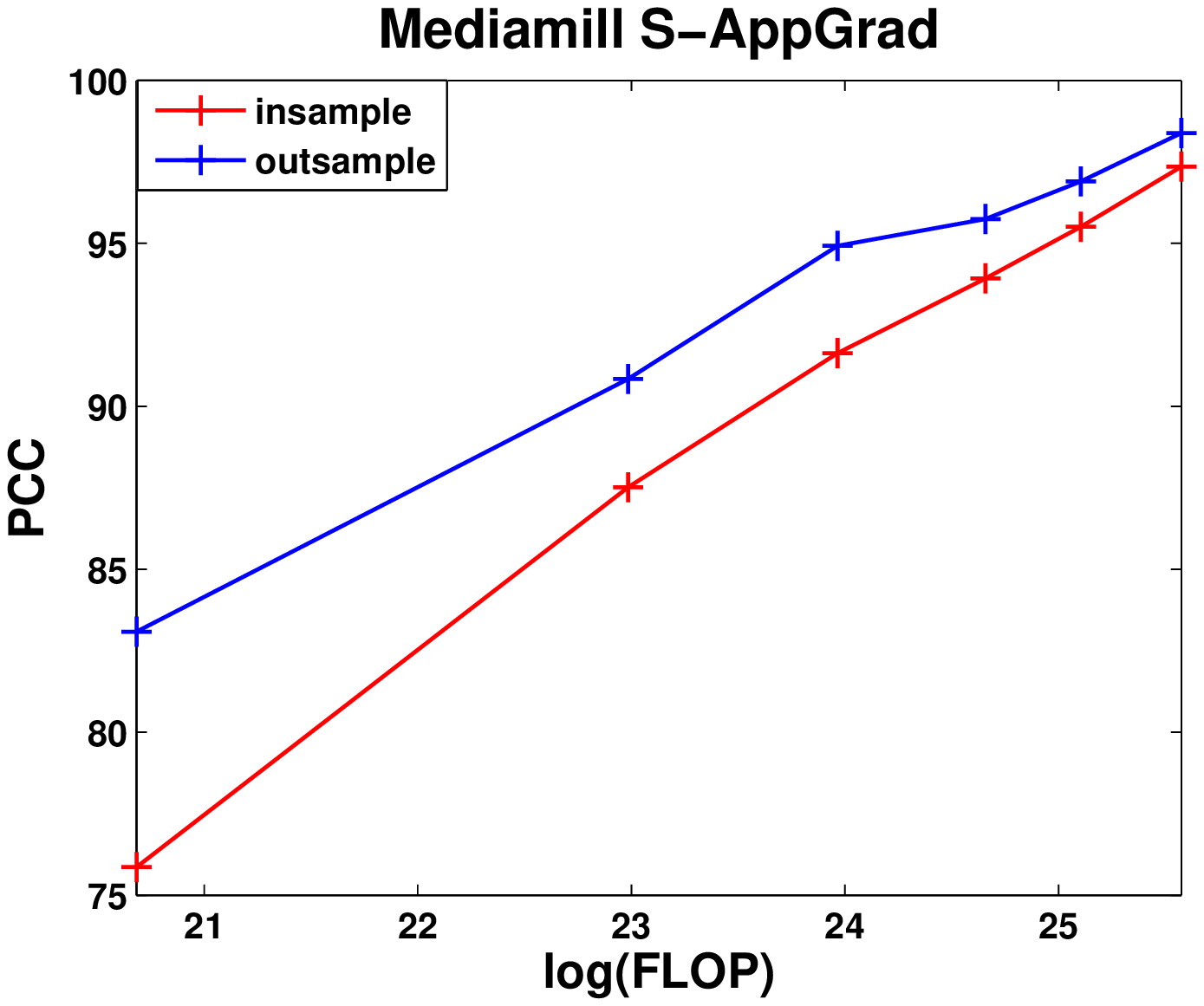}
}
\subfigure{%
\includegraphics[width=0.65\columnwidth]{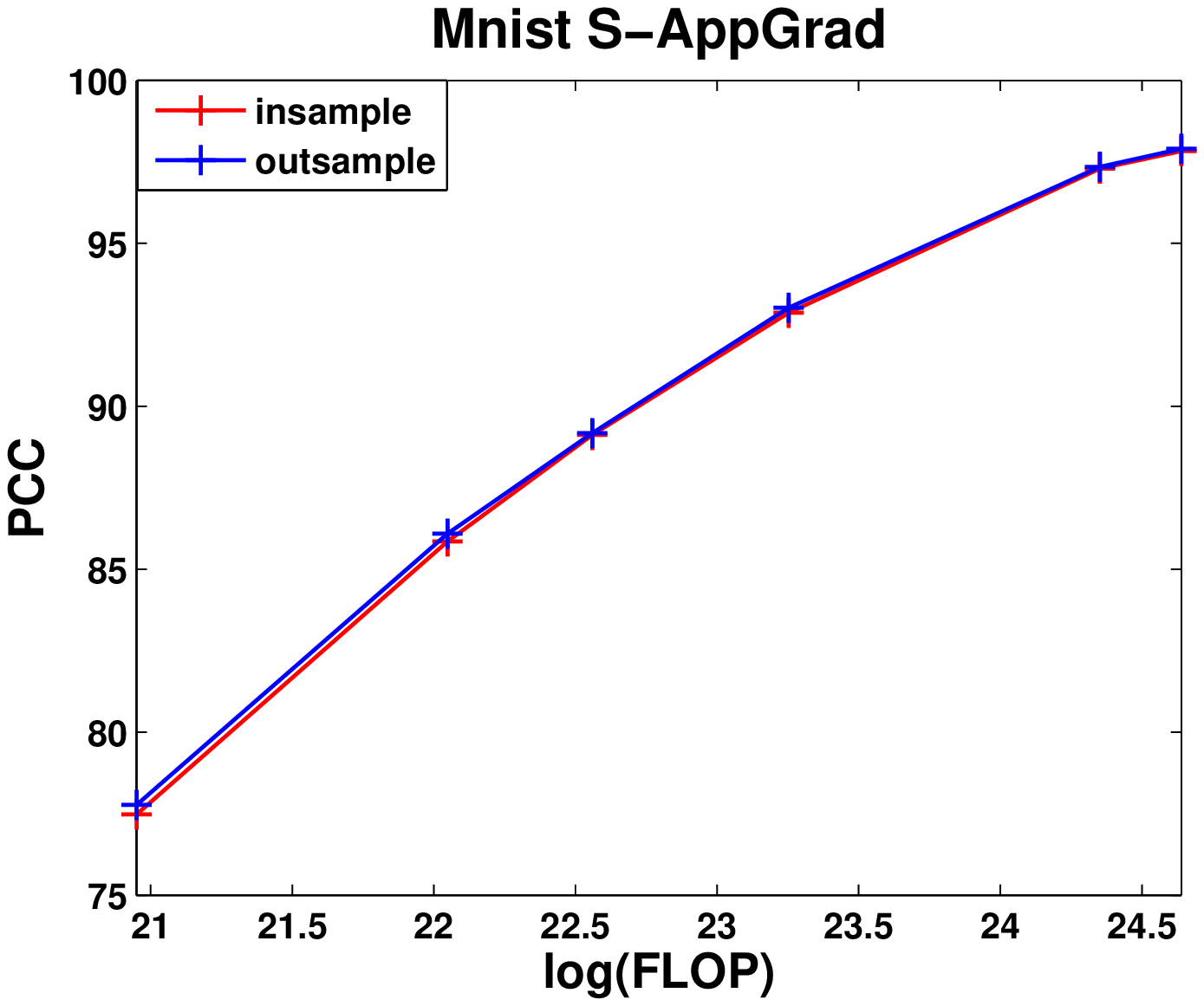}
}
\subfigure{%
\includegraphics[width=0.65\columnwidth]{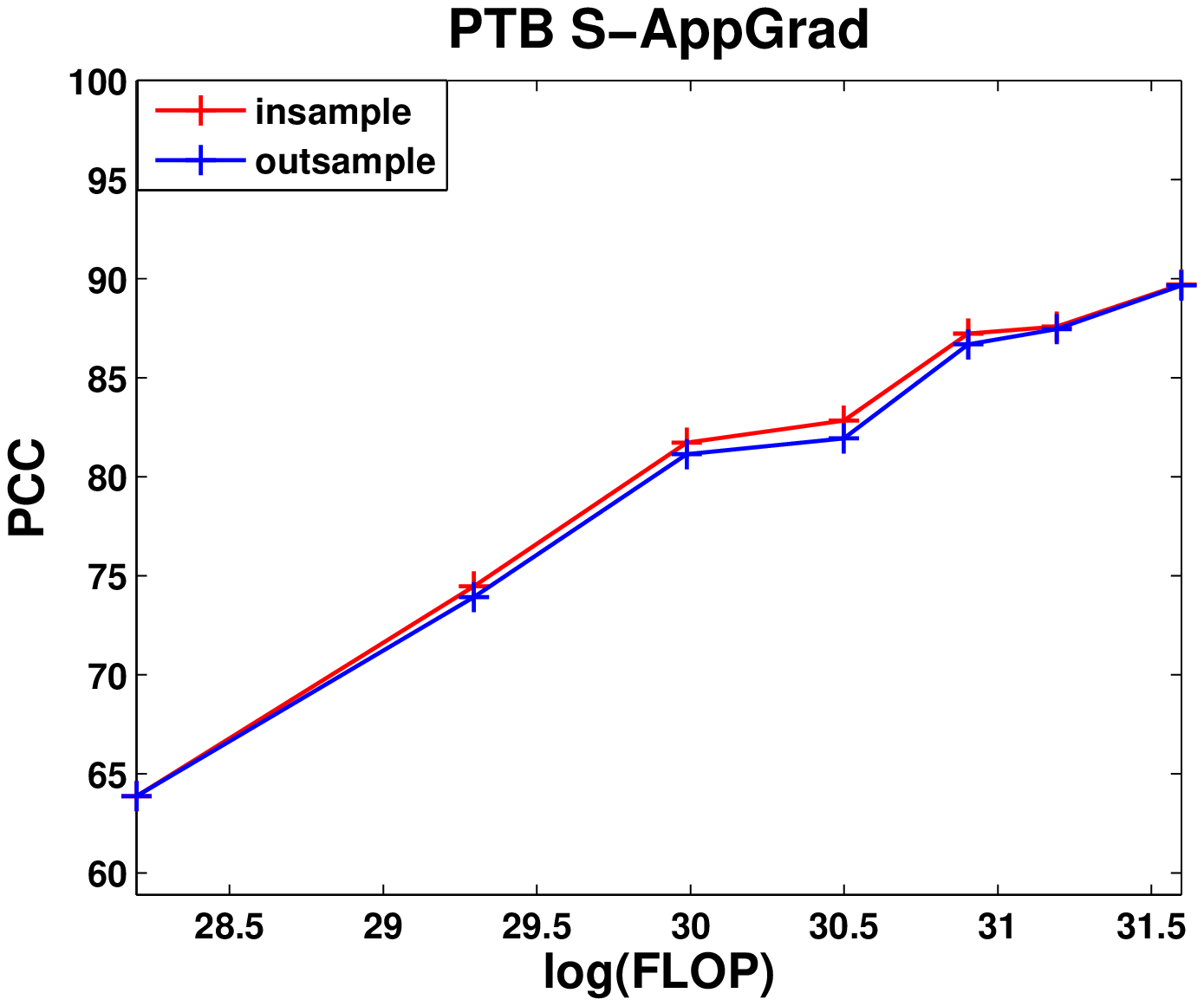}
}
\caption{Proportion of Correlations Captured (PCC) by \textit{AppGrad} and stochastic \textit{AppGrad} on different datasets}
\label{fig:figure}
\end{figure*}

For URL Reputation dataset,  as we mentioned earlier, classical algorithms fails on a typical desktop. The reason is that these algorithms only produce a one-shot estimate after the whole procedure is completed, which is usually prohibitive for huge datasets.  In this scenario, the advantage of online algorithms like stochastic \textit{AppGrad} becomes crucial. Further, the stochastic nature makes the algorithm cost-effective and generate decent approximations given fixed computational resources (e.g. FLOP). As revealed by Figure~\ref{urlsgd}, as the number of iterations increases, stochastic \textit{AppGrad} captures more and more correlations.  

Since the true CCA subspaces for URL dataset is too slow to compute, we compare our algorithm with some naive heuristics which can be carried out efficiently in large scale and catches a reasonable amount of correlation. Below is a brief description of them. 
\begin{itemize}
\item Non-Whitening (NW-CCA): directly perform SVD on the unwhitened covariance matrix $\X^T\Y$. This strategy is also used in \cite{witten2009penalized}
\item Diagnoally Whitening (DW-CCA) \cite{lu2014large}: avoid inverting matrices by approximating $\sxxx, \syyy$ with $(\mbox{diag}(\sx))^{-\frac{1}{2}}$ and $(\mbox{diag}(\sy))^{-\frac{1}{2}}$. 
\item Whitening the leading $m$ Principal Component Directions (PCA-CCA): First compute the leading $m$ dimensional principal component subspace and project the data matrices $\X$ and $\Y$ to the subspace, denote them $\U_x$ and $\U_y$. Then compute the top $k$ dimensional CCA subspace of the pair $(\U_x, \U_y)$. At last, transform the CCA subspace of $(\U_x, \U_y)$ back to the CCA subspace of orginal matrix pair $(\X, \Y)$.  Specifically for this example, we choose $m=1200$ (log(FLOP)=35, dominating the computational cost of Stochastic \textit{AppGrad}) . 
\end{itemize}
\begin{remark}
For all the heuristics mentioned above,  SVD and PCA steps are carried out using the randomized algorithms in \cite{tropp11}. For PCA-CCA,  as the number of Principal Components ($m$) increases, more correlation will be captured but the computational cost will also increase. When $m=p$, PCA-CCA is reduced to the orginal CCA. 
\end{remark}
\begin{figure}
\vskip 0.2in
\begin{center}
\centerline{\includegraphics[width=0.9\columnwidth]{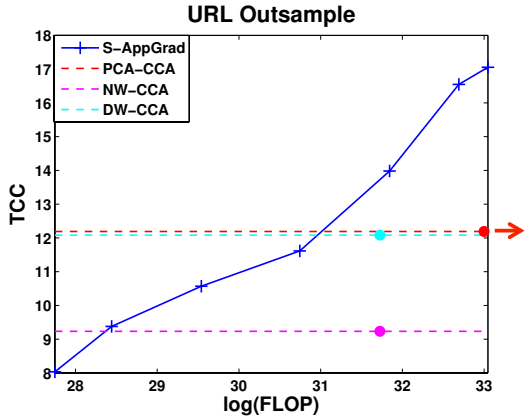}}
\end{center}
\vskip -0.2in
\caption{Total Correlations Captured (TCC) by NW-CCA, DW-CCA, PCA-CCA and stochastic \textit{AppGrad} on URL dataset. The dash lines indicate TCC for those heuristics and the colored dots denote corresponding computational cost.  Red arrow means $\mbox{log(FLOP)}$ of PCA-CCA is more than 33.}
\label{urlsgd}
\end{figure} 

Essentially, all the heuristics are incorrect algorithms and try to approximately whiten the data matrices. As suggested by Figure~\ref{urlsgd},  stochastic \textit{AppGrad} significantly captures much more correlations.

\section{Conclusions and Future Work}
In this paper, we present a novel first order method, \textit{AppGrad}, to tackle large scale CCA as a nonconvex optimization problem. This bottleneck-free algorithm is both memory efficient and computationally scalable. More importantly, its online variant is well-suited to practical high dimensional applications where batch algorithm is prohibitive and data laden regime where data is abundant and runtime is main concern. 

Further, \textit{AppGrad} is flexible and structure information can be easily incorporated into the algorithm. For example, if the canonical vectors are assumed to be sparse \cite{witten2009penalized, gao2014sparse}, a thresholding step can be added between the gradient step and normalization step to obtain sparse solutions while it is hard to add sparse constraint to the classical CCA formulation which is a generalized eigenvalue problem. Heuristics in \cite{witten2009penalized} avoid this by simply skipping the whitening procedure (NW-CCA). \cite{gao2014sparse} resorts to semidefinite programming and therefore is very slow. \textit{AppGrad} with thresholding works well in simulations and we leave its theoretical properties for future research. 

\bibliography{cca}
\bibliographystyle{icml2015}
\twocolumn[
\icmltitle{Supplementary Material: Finding Linear Structure \\in Large Datasets with Scalable Canonical Correlation Analysis}
\section{Proofs}
\vspace{0.1in}
\subsection{Preliminaries}
\vspace{0.1in}
A brief review of the notations in the main paper: 
$$\sx=\X^\top\X/n, \sxy=\X^\top\Y/n, \sy=\Y^\top\Y/n, \|u\|_x=(u^\top\sx u)^{\frac{1}{2}}, \|v\|_y=(v^\top\sy v)^{\frac{1}{2}}$$
$$\ttx^t=\tpx^t-\tpx_1, \tty^t=\tpy^t-\tpy_1, \tx^t=\phi^t-\phi_1, \ty^t=\psi^t-\psi_1$$
Further, we define $cos_x(u, v)=\frac{u^\top\sx v}{\xnorm{u}\,\xnorm{v}}$, the cosine of the angle between two vectors induced by the inner product $\left\langle u, v \right\rangle= u^\top\sx v$. Similarly, we define $cos_y(u, v)=\frac{u^\top\sy v}{\ynorm{u}\,\ynorm{v}}$. To prove the theorem, we will repeatedly use the following lemma. 
\begin{lemma}
\label{curve}
$\xnorm{\tx^t}\leq \frac{1}{\lambda_1}\sqrt{\frac{2}{1+cos_x(\phi^t, \phi_1)}}\xnorm{\ttx^t}$\, and \,$\ynorm{\ty^t}\leq \frac{1}{\lambda_1}\sqrt{\frac{2}{1+cos_y(\psi^t, \psi_1)}}\ynorm{\tty^t}$
\end{lemma}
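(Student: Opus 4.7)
The plan is to reduce everything to a two-dimensional computation inside the $\X$-induced inner product $\langle u,v\rangle_x = u^\top\sx v$. Set $r=\xnorm{\tpx^t}$ and let $\theta$ be the angle between $\tpx^t$ and $\phi_1$ with respect to $\langle\cdot,\cdot\rangle_x$. Since $\phi^t = \tpx^t/\xnorm{\tpx^t}$ is just a positive rescaling of $\tpx^t$, we have $cos_x(\phi^t,\phi_1) = \cos\theta$, so it suffices to bound $\xnorm{\tx^t}$ in terms of $\xnorm{\ttx^t}$, $\lambda_1$, and $\cos\theta$.

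First I would write both norms using the law of cosines. Recalling that $\xnorm{\phi_1}=1$, $\xnorm{\tpx_1}=\lambda_1$, and $\xnorm{\phi^t}=1$:
\begin{align*}
\xnorm{\ttx^t}^2 &= r^2 - 2r\lambda_1\cos\theta + \lambda_1^2,\\
\xnorm{\tx^t}^2 &= 2(1-\cos\theta) = \frac{2\sin^2\theta}{1+\cos\theta}.
\end{align*}

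The desired inequality $\xnorm{\tx^t}^2 \leq \frac{2}{\lambda_1^2(1+\cos\theta)}\xnorm{\ttx^t}^2$ then becomes, after clearing denominators,
\[
\lambda_1^2(1-\cos^2\theta) \;\leq\; r^2 - 2r\lambda_1\cos\theta + \lambda_1^2,
\]
which rearranges to $(r-\lambda_1\cos\theta)^2 \geq 0$ and is therefore automatic. The symmetric statement for $\tty^t$ and $\ty^t$ follows by repeating the same derivation with $\sy$, $\psi_1$, $\tpy^t$ in place of $\sx$, $\phi_1$, $\tpx^t$.

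There is not really a hard step here; the only mild subtlety is recognizing that one should not try to bound the two norms separately via a triangle-inequality argument (which would lose the $\lambda_1$ scaling) but instead compute both exactly in a shared two-dimensional picture. The factor $\sqrt{2/(1+\cos_x(\phi^t,\phi_1))}$ is exactly what records the ``worst-case radius'' $r$ through the identity $(r-\lambda_1\cos\theta)^2\geq 0$, and without tracking it the bound would fail whenever $\tpx^t$ has norm far from $\lambda_1$.
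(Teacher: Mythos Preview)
Your proof is correct and essentially mirrors the paper's: both reduce to the planar identity $\xnorm{\ttx^t}^2 \geq \lambda_1^2\sin_x^2(\phi^t,\phi_1)$ together with $\xnorm{\tx^t}^2 = 2(1-\cos_x(\phi^t,\phi_1))$. The paper phrases the key inequality geometrically (the distance from $\tpx^t$ to $\tpx_1$ is at least the perpendicular distance $\lambda_1\sin\theta$), whereas you arrive at the equivalent $(r-\lambda_1\cos\theta)^2\geq 0$ by completing the square---same content, slightly different packaging.
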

\textit{Proof of Lemma~\ref{curve}}  Notice that $cos_x(\tpx^t, \tpx_1)=cos_x(\phi^t, \phi_1)$, then
\begin{equation}
\begin{aligned}
\|\ttx^t\|_x^2=\|\tpx^t-\tpx_1\|_x^2\geq \|\tpx_1\|^2 sin_x^2(\tpx^t, \tpx_1)=\lambda_1^2 sin_x^2(\phi^t, \phi_1)
\end{aligned}
\nonumber
\end{equation}
Also notice that $\|\phi^t\|_x=\|\phi_1\|_x=1$, which implies $cos_x(\phi^t, \phi_1)=1-\|\phi^t-\phi_1\|_x^2/2=1-\|\tx^t\|_x^2/2$. Further
\begin{equation}
\|\ttx^t\|_x^2\geq \lambda_1^2 sin_x^2(\phi^t, \phi_1)=\lambda_1^2(1-cos_x^2(\phi^t, \phi_1))=\frac{\lambda_1^2}{2}\|\tx^t\|_x^2(1+cos_x(\phi^t, \phi_1))
\nonumber
\end{equation}

Square root both sides, 
$$\xnorm{\tx^t}\leq \frac{1}{\lambda_1}\sqrt{\frac{2}{1+cos_x(\phi^t, \phi_1)}}\xnorm{\ttx^t}$$
Similar argument will show that 
$$\ynorm{\ty^t}\leq \frac{1}{\lambda_1}\sqrt{\frac{2}{1+cos_y(\psi^t, \psi_1)}}\ynorm{\tty^t}$$

\subsection{Proof of Theorem 2.1}
\vspace{0.1in}
Without loss of generality, we can always assume $cos_x(\tpx^t, \tpx_1), cos_y(\tpy^t, \tpy_1)\geq 0$ because the canonical vectors are only identifiable up to a flip in sign and we can always choose $\tpx_1, \tpy_1$ such that the cosines are nonnegative. Apply simple algebra to the gradient step $\tpx^{t+1}=\tpx^t-\eta(\sx\tpx^t-\sxy \psi^t)$, we have
\begin{align*}
\begin{gathered}
\tpx^{t+1}-\tpx_1=\tpx^t-\tpx_1-\eta(\sx(\tpx^t-\tpx_1)+\sx\tpx_1-\sxy (\psi^t-\psi_1)-\sxy\psi_1)\\
\ttx^{t+1}=\ttx^{t}-\eta(\sx\ttx^t-\sxy\tx^t)-\eta(\sx\tpx_1-\sxy\psi_1)
\end{gathered}
\end{align*}
By Lemma~\ref{decom}, $\eta(\sx\tpx_1-\sxy\psi_1)=\eta(\sx\tpx_1-\lambda_1\sx\phi_1)=0$, which implies
\begin{align*}
\ttx^{t+1}&=\ttx^{t}-\eta(\sx\ttx^t-\sxy\ty^t)
\end{align*}
Square both sizes, 
\begin{equation}
\|\ttx^{t+1}\|^2=\|\ttx^{t}\|^2+\eta^2\|\sx\ttx^t-\sxy\ty^t\|^2-2\eta(\ttx^{t})^\top(\sx\ttx^t-\sxy\ty^t)
\label{obj}
\end{equation}
]

\twocolumn[
Apply Lemma~\ref{decom}, 
\begin{equation}
\|\sxy\ty^t\|=\|\sx\px \lm\py^T\sy\ty^t\|\leq \|\sxx\|\|\sxx\px\|\|\lm\|\|\py^\top\syy\|\|\syy\ty^t\|\leq \lambda_1L_1^{\frac{1}{2}} \|\ty^t\|_y
\nonumber
\end{equation}
 The last inequality uses the assumption that $\lambda_{max}(\sx), \lambda_{max}(\sy)\leq L_1$. By Lemma\ref{curve}, $\|\ty^t\|_y\leq \frac{\sqrt{2}}{\lambda_1}\|\tty^t\|_y$. Hence, $\|\sxy\ty^t\|\leq \sqrt{2L_1}\|\tty^t\|_y$.
Also notice that $\|\sx\ttx^t\|\leq \|\sxx\|\|\sxx\ttx^t\|\leq L_1^\frac{1}{2}\|\ttx^t\|_x$, then
\begin{equation}
\|\sx\ttx^t-\sxy\ty^t\|^2\leq (L_1^\frac{1}{2}\|\ttx^t\|_x+ \sqrt{2}L_1^{\frac{1}{2}} \|\tty^t\|_y)^2\leq 2L_1(\|\ttx^t\|_x^2+2\|\tty^t\|_y^2)
\nonumber
\end{equation}
Substitute into \eqref{obj}, 
\begin{equation}
\|\ttx^{t+1}\|^2\leq  \|\ttx^t\|^2-2\eta\|\ttx^t\|_x^2+2L_1\eta^2(\|\ttx^t\|_x^2+2\|\tty^t\|_y^2)+2\eta(\ttx^{t})^\top\sxy\ty^t
\label{obj2}
\end{equation}
Now, we are going to bound $(\ttx^{t})^T\sxy\ty^t$. Because $\syy\py$ is an orthonormal matrix (orthogonal if $p=p_1$) and $\syy\psi_t$ is a unit vector, there exisit coefficients $\alpha_1, \cdots, \alpha_p, \alpha_\perp$ and unit vector $\psi_\perp\in ColSpan(\syy\py)^\perp$ such that $\syy\psi_t=\sum_{i=1}^p \alpha_i\syy\psi_i+\alpha_\perp\syy\psi_{\perp}, \sum_{i=1}^p \alpha_i^2+\alpha_\perp^2=1$. Therefore,
\begin{align*}
(\ttx^t)^\top\sx\px\lm\py^\top\sy\ty^t&=\ttx^t\sx\px\lm (\syy\py)^\top \{(\alpha_1-1)\syy\psi_1+\sum_{i=2}^p \alpha_i\syy\psi_i+\alpha_\perp\syy\psi_{\perp}\}\\
&=\lambda_1(\alpha_1-1)(\ttx^t)^\top\sx\phi_1+\sum_{i=2}^p \alpha_i\lambda_i(\ttx^t)^\top\sx\phi_i 
\end{align*}
By Cauthy-Schwarz inequality,
\begin{align*}
(\ttx^t)^\top\sx\px\lm\py^\top\sy\ty^t&\leq \Big(\lambda_1^2(1-\alpha_1)^2+\sum_{i=2}^p \alpha_i^2\lambda_i^2\Big)^\frac{1}{2}\Big(\sum_{i=1}^p \big((\ttx^t)^\top\sx\phi_1\big)^2\Big)^\frac{1}{2}\\
&\leq \Big(\lambda_1^2(1-\alpha_1)^2+\lambda_2^2(1-\alpha_1^2)\Big)^\frac{1}{2}\|\ttx^t\|_x\\
&=\Big(\lambda_1^2\frac{1-\alpha_1}{1+\alpha_1}+\lambda_2^2\Big)^\frac{1}{2}(1-\alpha_1^2)^{\frac{1}{2}}\|\ttx^t\|_x
\end{align*}
By definition, $1-\alpha_1=1-cos_y(\psi^t, \psi_1)=\frac{\|\ty^t\|^2_y}{2}$. Further by Lemma~\ref{curve}, 
\begin{equation}
\begin{gathered}
1-\alpha_1\leq  \frac{1}{\lambda_1^2(1+\alpha_1)}\|\tty^t\|^2_y
\end{gathered}
\nonumber
\end{equation}
Therefore, 
\begin{equation}
\begin{aligned}
(\ttx^t)^\top\sx\px\lm\py^\top\sy\ty^t&\leq \Big(\frac{1-\alpha_1}{1+\alpha_1}+\frac{\lambda_2^2}{\lambda_1^2}\Big)^\frac{1}{2} \|\ttx^t\|_x\|\tty^t\|_y\\
&\leq \Big(\frac{\|\tty^t\|_y^2}{\lambda_1^2(1+\alpha_1)^2}+\frac{\lambda_2^2}{\lambda_1^2}\Big)^\frac{1}{2} \|\ttx^t\|_x\|\tty^t\|_y\\
&\leq \frac{1}{2}\Big(\frac{\|\tty^t\|_y^2}{\lambda_1^2}+\frac{\lambda_2^2}{\lambda_1^2}\Big)^\frac{1}{2}\Big( \|\ttx^t\|_x^2+\|\tty^t\|_y^2\Big)
\end{aligned}
\nonumber
\end{equation}
Substitute into \eqref{obj2}, 
\begin{equation}
\begin{aligned}
\|\ttx^{t+1}\|^2&\leq \|\ttx^t\|^2-2\eta\|\ttx^t\|_x^2+2L_1\eta^2\Big(\|\ttx^t\|_x^2+2\|\tty^t\|_y^2\Big)+\eta\Big(\frac{\|\tty^t\|_y^2}{\lambda_1^2}+\frac{\lambda_2^2}{\lambda_1^2}\Big)^\frac{1}{2} \Big( \|\ttx^t\|_x^2+\|\tty^t\|_y^2\Big)
\end{aligned}
\nonumber
\end{equation}
Similar analysis implies that, 
\begin{equation}
\begin{aligned}
\|\tty^{t+1}\|^2&\leq \|\tty^t\|^2-2\eta\|\tty^t\|_y^2+2L_1\eta^2\Big(\|\tty^t\|_y^2+2\|\ttx^t\|_x^2\Big)+\eta\Big(\frac{\|\ttx^t\|_x^2}{\lambda_1^2}+\frac{\lambda_2^2}{\lambda_1^2}\Big)^\frac{1}{2} \Big( \|\ttx^t\|_x^2+\|\tty^t\|_y^2\Big)
\end{aligned}
\nonumber
\end{equation}
]

\twocolumn[
Add these two inequalities, 
\begin{equation}
\begin{aligned}
\|\ttx^{t+1}\|^2+\|\tty^{t+1}\|^2&\leq \Big(\|\ttx^t\|^2+\|\tty^t\|^2\Big)-2\eta\Big\{1-\frac{1}{2}\Big(\frac{\|\tty^t\|_y^2}{\lambda_1^2}+\frac{\lambda_2^2}{\lambda_1^2}\Big)^\frac{1}{2}-\frac{1}{2}\Big(\frac{\|\ttx^t\|_x^2}{\lambda_1^2}+\frac{\lambda_2^2}{\lambda_1^2}\Big)^\frac{1}{2}\\
&\quad\quad\quad-3L_1\eta\Big\}\Big(\|\ttx^t\|_x^2+\|\tty^t\|_y^2\Big) \\
\end{aligned}
\nonumber
\end{equation}
Notice that $\sqrt{a}+\sqrt{b}\leq \sqrt{2(a+b)}$, we have
\begin{equation}
\begin{aligned}
\Big(\frac{\|\tty^t\|^2_y}{\lambda_1^2}+\frac{\lambda_2^2}{\lambda_1^2}\Big)^\frac{1}{2}+\Big(\frac{\|\ttx^t\|^2_x}{\lambda_1^2}+\frac{\lambda_2^2}{\lambda_1^2}\Big)^\frac{1}{2}&\leq \Big(\frac{2\|\tty^t\|^2_y}{\lambda_1^2}+\frac{2\|\ttx^t\|^2_x}{\lambda_1^2}+\frac{4\lambda_2^2}{\lambda_1^2}\Big)^\frac{1}{2}\\
&\leq \Big(\frac{2L_1\|\tty^t\|^2}{\lambda_1^2}+\frac{2L_1\|\ttx^t\|^2}{\lambda_1^2}+\frac{4\lambda_2^2}{\lambda_1^2}\Big)^\frac{1}{2}\\
&=\frac{1}{2\lambda_1}\Big(\frac{L_1}{2}\|\tty^t\|^2+\frac{L_1}{2}\|\ttx^t\|^2+\lambda_2^2\Big)^\frac{1}{2}
\end{aligned}
\nonumber
\end{equation}
Then,
\begin{equation}
\begin{aligned}
\|\ttx^{t+1}\|^2+\|\tty^{t+1}\|^2&\leq \Big(\|\ttx^t\|^2+\|\tty^t\|^2\Big)-2\eta\Big\{1-\frac{1}{\lambda_1}\Big(\frac{L_1}{2}\|\tty^t\|^2+\frac{L_1}{2}\|\ttx^t\|^2+\lambda_2^2\Big)^\frac{1}{2}\\
&\quad\quad\quad-3L_1\eta\Big\}\Big(\|\ttx^t\|_x^2+\|\tty^t\|_y^2\Big)
\end{aligned}
\label{contraction}
\end{equation}
By definition, $\delta=1-\frac{1}{\lambda_1}\Big(\frac{L_1}{2}\|\tty^0\|^2+\frac{L_1}{2}\|\ttx^0\|^2+\lambda_2^2\Big)^\frac{1}{2}$ and $\eta=\frac{\delta}{6L_1}$. Substitute in \eqref{contraction} with $t=0$,
\begin{equation}
\begin{aligned}
\|\ttx^{1}\|^2+\|\tty^{1}\|^2&=\Big(\|\ttx^0\|^2+\|\tty^0\|^2\Big)-\frac{\delta^2}{6L_1}\Big(\|\ttx^0\|_x^2+\|\tty^0\|_y^2\Big)\\
&\leq \Big(\|\ttx^0\|^2+\|\tty^t\|^2\Big)-\frac{\delta^2}{6L_1L_2}\Big(\|\ttx^0\|^2+\|\tty^0\|^2\Big)\\
&\leq \Big(1-\frac{\delta^2}{6L_1L_2}\Big)\Big(\|\ttx^{0}\|^2+\|\tty^{0}\|^2\Big)\\
\end{aligned}
\nonumber
\end{equation}
It follows by induction that $\forall\, t\in\mathbb{N}_+$
\begin{equation}
\|\ttx^{t+1}\|^2+\|\tty^{t+1}\|^2\leq \Big(1-\frac{\delta^2}{6L_1L_2}\Big)\Big(\|\ttx^{t}\|^2+\|\tty^{t}\|^2\Big)
\nonumber
\end{equation}
\subsection{Proof of Proposition 2.3}
\vspace{0.1in}
Substitute $(\px^t,\py^t, \widetilde{\px}^t, \widetilde{\py}^t)=(\px_k, \py_k, \px_k\lm_k, \py_k\lm_k)\Q$ into the iterative formula in Algorithm 4. 
\begin{equation}
\begin{aligned}
\widetilde{\px}^{t+1}&=\px_k\lm_k\Q-\eta_1(\sx\px_k\lm_k-\sxy\py_k)\Q\\
&=\px_k\lm_k\Q-\eta_1(\sx\px_k\lm_k-\sx\px \lm\py^\top\sy\py_k)\Q\\
&=\px_k\lm_k\Q-\eta_1(\sx\px_k\lm_k-\sx\px_k\lm_k)\Q\\
&=\px_k\lm_k\Q
\end{aligned}
\nonumber
\end{equation}
The second equality is direct application of Lemma 1. The third equality is due to the fact that $\py^\top\sy\py=I_p$. Then,
$$(\widetilde{\px}^{t+1})^\top\sx\widetilde{\px}^{t+1}=\Q^\top\lm_k^2\Q$$
and
$$\px^{t+1}=\widetilde{\px}^{t+1}\Q^\top\lm_k^{-1}\Q=\px_k\Q$$
Therefore $(\px^{t+1}, \widetilde{\px}^{t+1})=(\px^{t}, \widetilde{\px}^{t})=(\px_k, \px_k\lm_k)\Q$.
A symmetric argument will show that $(\py^{t+1}, \widetilde{\py}^{t+1})=(\py^{t}, \widetilde{\py}^{t})=(\py_k, \py_k\lm_k)\Q$, which completes the proof. 
]
\end{document}